\newtheorem{dfn}{Definition}
\newtheorem{thm}{Theorem}
\newtheorem{lem}{Lemma}
\newtheorem{cor}{Corollary}
\begin{document}

\title{Probabilistic Analysis of \textsc{rrt} Trees}
\author{Konrad Anand and Luc Devroye}
\date{School of Computer Science,\\
McGill University,\\
Montreal, Canada}
\maketitle

\begin{abstract}
This thesis presents analysis of the properties and run-time of the Rapidly-exploring Random Tree (\textsc{rrt}) algorithm. It is shown that the time for the \textsc{rrt} with stepsize $\epsilon$ to grow close to every point in the $d$-dimensional unit cube is $\Theta\left(\frac1{\epsilon^d} \log \left(\frac1\epsilon\right)\right)$. Also, the time it takes for the tree to reach a region of positive probability is $O\left(\epsilon^{-\frac32}\right)$. Finally, a relationship is shown to the Nearest Neighbour Tree (\textsc{nnt}). This relationship shows that the total Euclidean path length after $n$ steps is $O(\sqrt n)$ and the expected height of the tree is bounded above by $(e + o(1)) \log n$.
\end{abstract}
\newpage
\tableofcontents
\newpage

\begin{figure*}
\centering
\includegraphics[width=0.45\textwidth]{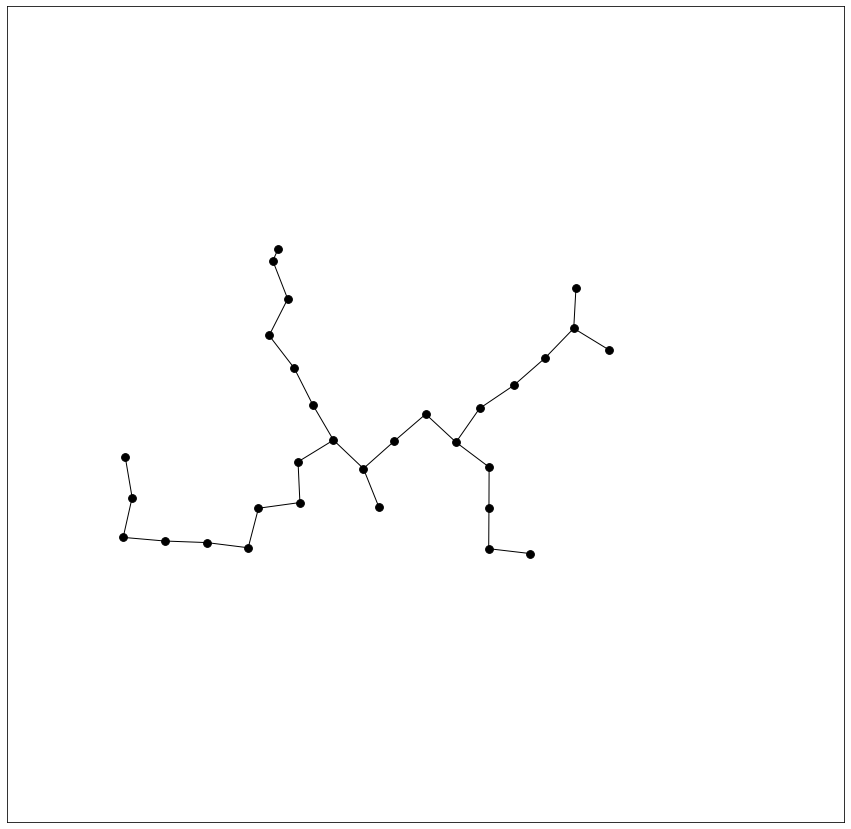}
\includegraphics[width=0.45\textwidth]{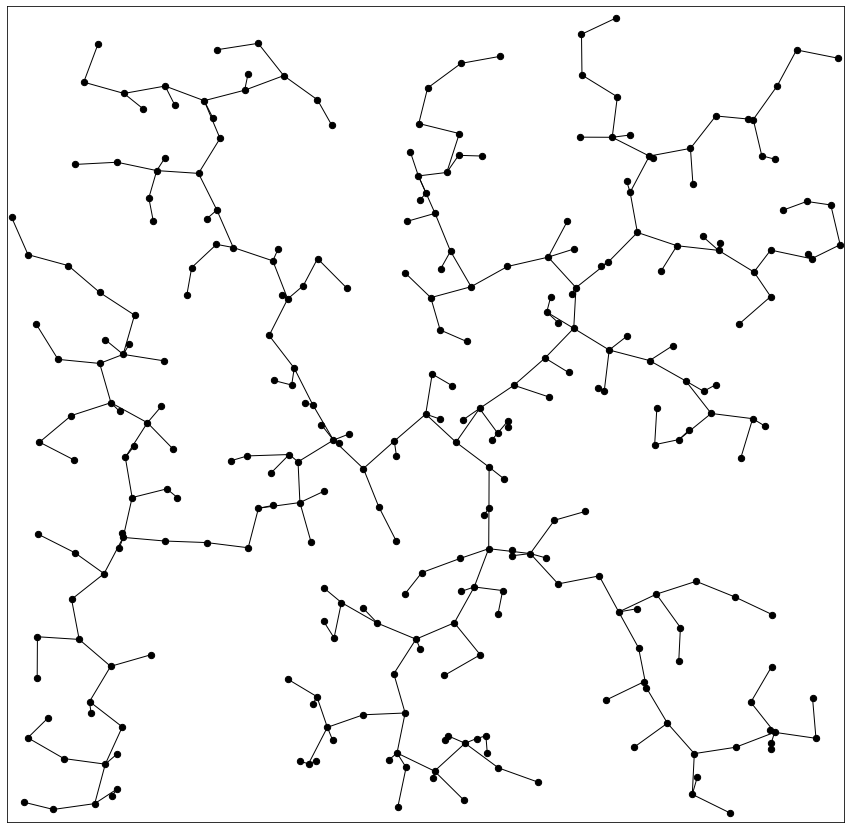}
\caption{The growth of the \textsc{rrt} at times $\Theta\left(\frac1\epsilon\right)$ left and $\Theta\left(\frac1{\epsilon^2}\right)$ right. Above the \textsc{rrt} is expanding outwards quickly. Below the \textsc{rrt} has begun to fill holes left by its expansion but is far from covering the space.}
\end{figure*}

\section{Introduction}

In 1998, LaValle \cite{Lavalle98rapidly-exploringrandom} introduced a ground-breaking approach to path planning: the Rapidly-exploring Random Tree (\textsc{rrt}). At the time the area of path planning had already been taken over by randomized algorithms due to strong evidence of a limit on the potential speed of deterministic path  \cite{kuffner2000rrt, reif1979complexity, reif1985complexity}. The \textsc{rrt} was a novel randomized approach that has quickly come to dominate modern path planning. Previously the most widely used algorithm had been the probabilistic road map \cite{kavraki1996probabilistic, dalibard2009control}, an algorithm which randomly selects points in the space and connects these points to the graph. The \textsc{rrt} differed, in that rather than adding these randomly selected points, it built a tree with small edge length based on these points.

The \textsc{rrt}'s ubiquity is in large part due to a number of variants to the original algorithm, such as the \textsc{rrt}-Connect \cite{kuffner2000rrt, lavalle2001randomized} and the \textsc{rrt}*  \cite{karaman2011sampling, islam2012rrt}. In the former, we grow an \textsc{rrt} out of both the initial and final point of a desired path, while trying to connect them. In the latter, to optimize the spanning ratio we grow our tree based on a choice of minimal path length as well as the locations of the randomly selected points. These modifications and others are missing analytical grounding to understand their performance.

There have been prior analyses of the \textsc{rrt} but none in quite this area. Soon after the creation of the \textsc{rrt}, Kuffner and LaValle \cite{kuffner2000rrt} proved that the algorithm was probabilistically complete---i.e., the algorithm succeeds with high probability---and that the distribution of the $n$-th vertex in the tree converges to the sampling distribution. In their analysis however, they commented that they had no result on the speed of this convergence. Further analysis has been done by Arnold et al. \cite{arnold2013convex} on the speed at which the \textsc{rrt} spreads out by studying an idealized version of the algorithm on an infinite disc. Work has also been done by Karaman and Frazzoli \cite{karaman2011sampling} on the complexity of the update step in the algorithm and the optimality of the path, while Dalibard and Laumond \cite{dalibard2009control} have studied the effects of narrow passages on the performance of the algorithm. However there remains a gap in the literature on determining a run time until success of the \textsc{rrt}. This paper aims to fill this gap.

\subsection{Organization}

The paper is organized as follows: first we set out definitions and a framework for our approach. We then split our analysis into three phases of growth of the \textsc{rrt}.

The first phase is the phase where the \textsc{rrt} spreads out quickly. Most of the space is still far away from the tree, so most steps give a large amount of progress. This is the phase most similar to the problem considered by Arnold et al.\cite{arnold2013convex}.

The second phase is the phase where the \textsc{rrt} grows close to every point in the space. Kuffner and LaValle \cite{kuffner2000rrt} proved that this occurs and we will add information on the speed.

Finally, the third phase covers the growth of the tree once the \textsc{rrt} has grown close to every point in the graph. We find a relationship here to the nearest neighbour tree which allows easy study of the tree and many results on its behaviour.

\section{Definitions}
\subsection{Initial Framework}

\begin{figure}[ht]
\centering
\includegraphics[width=0.5\textwidth]{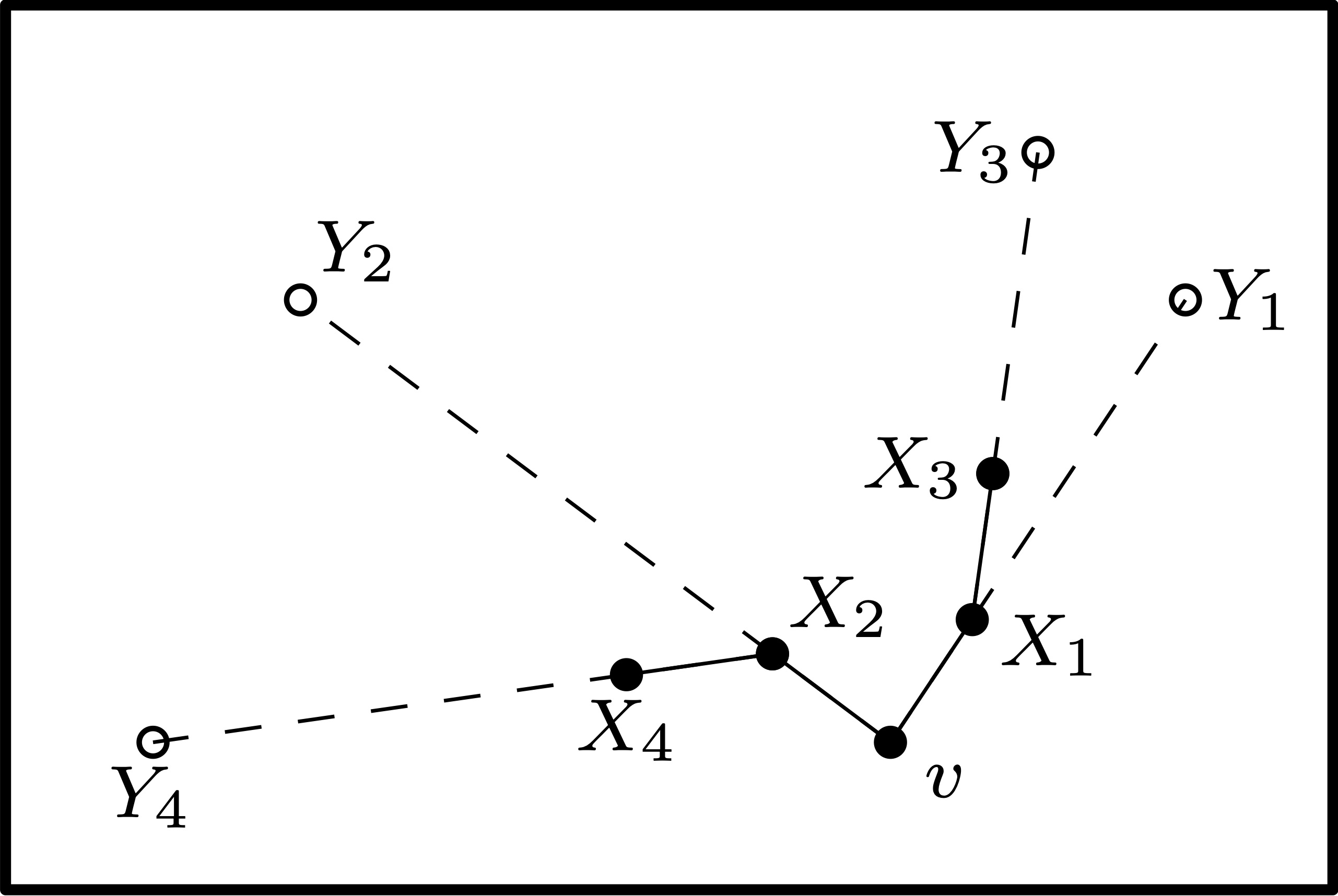}
\caption{The start of an \textsc{rrt}. The $X_i$'s are the vertices and the $Y_i$'s are their determining uniform random variables.}
\end{figure}

We begin with basic definitions of the \textsc{rrt} on compact, convex metric spaces. In lieu of the vocabulary used in general path planning problems, we will use the vocabulary of metric spaces and probability.

$\newline$
\begin{dfn}
An \textsc{rrt} is a sequence of vertex and edge random variable pairs
\begin{align*}
    T_\epsilon(M, v) := ((X_i,E_i))_{i \in \mathbb N}
\end{align*}
defined on a compact metric space $M$ with distance function $d$, a step-size parameter $\epsilon > 0$, $E_0 = \emptyset$, and an initial vertex $v = X_0 \in M$. Each $(X_i,E_i), i \geq 1$ is generated in the following way: take a sequence of uniform $M$-valued random variables $(Y_i)_{i=1}^\infty$ and let
\begin{align*}
    \gamma_i = \textnormal{argmin}_{0 \leq j \leq i-1} d(Y_i,X_j).
\end{align*}
Define
\begin{align*}
    X_i &:=
    \begin{cases}
      Y_i &: \text{if}\ \min_{0 \leq j \leq i-1} d(x_i,v_j) \leq     \epsilon, \\
      X_{\gamma_i} + \epsilon\frac{Y_i - X_{\gamma_i}}{d(Y_i,X_{\gamma_i})} &: \textnormal{otherwise}, 
    \end{cases}
\end{align*}
and
\begin{align*}
    E_i &:= (X_{\gamma_i},X_i).
\end{align*}
An \textsc{rrt} at time $n \in \mathbb N$ is the subset $T_\epsilon^n(M,v) \subset T_\epsilon(M,v)$ given by
\begin{align*}
    T_\epsilon^n(M, v) := ((X_i,E_i))_{1 \leq i \leq n}.
\end{align*}
\end{dfn}

In plain terms, we start with an initial vertex $v$ and then iteratively we select a point $Y_i$ uniformly in the space, find $X_{\gamma_i}$---the closest vertex in the tree at time $i-1$---and then add a vertex-edge pair in the direction of $Y_i$ with edge length less than or equal to $\epsilon$. In this definition we do not check for obstacles blocking the edge as we are assuming the space to be convex.

To remove clunkiness in notation, we often leave $M$ and $v$ implicit and just write $T_\epsilon^n$. Typically, $M$ will be the $d$-dimensional unit cube $[0,1]^d$. Also, the initial vertex $v$ is often irrelevant to our analysis.

\subsection{Covering Metric Spaces}

One measure of success for the \textsc{rrt} is whether or not the tree has grown close to every point in the space. A natural definition for ``close'' for an \textsc{rrt} $T_\epsilon^n$ is being within $\epsilon$ of the tree.

\begin{dfn}
Consider a compact metric space $M$ and a graph $G = (V,E) \subset M$ where $V$ is the set of vertices and $E$ is the set of edges. The $\epsilon$-cover of $G$ is
\begin{align*}
    C_\epsilon(G) := \bigcup_{v \in V} B(v,\epsilon)
\end{align*}
where $B(x,\epsilon)$ is the ball
\begin{align*}
    B(x,\epsilon) = \left\{y \in M : d(x,y) \leq \epsilon \right\}.
\end{align*}
In the case of an \textsc{rrt} $T_\epsilon^n = ((X_i,E_i))_{1 \leq i \leq n}$, the cover is
\begin{align*}
    C(T_\epsilon^n) := \bigcup_{0 \leq i \leq n} B(X_i,\epsilon).
\end{align*}
\end{dfn}

\begin{dfn}
The covering time of a metric space $M$ by an \textsc{rrt}  $T_\epsilon^n$ is the $\mathbb N$-valued random variable
\begin{align*}
    \tau(T_\epsilon) = \min \{n \in \mathbb N : C(T_\epsilon^n) \supset M\}.
\end{align*}
\end{dfn}

In particular, we will analyze the expected covering time, $\mathbb E \left[\tau(T_\epsilon)\right]$. Unless explicitly noted, we assume that $M = [0,1]^2$.s

\section{Phase 1---Initial Growth}

\begin{figure}
\centering
\includegraphics[width=0.7\textwidth]{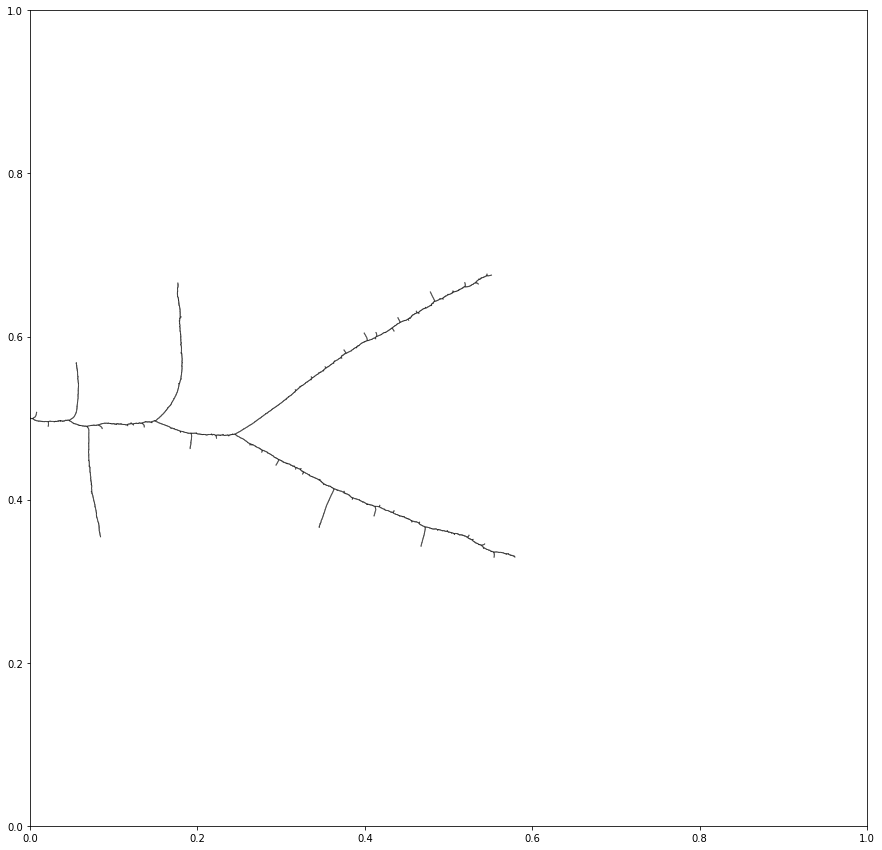}
\caption{The first phase of an \textsc{rrt}.}
\end{figure}

We begin by studying how quickly the \textsc{rrt} spreads. Arnold et al. \cite{arnold2013convex} consider a modification to the \textsc{rrt} where the space is the plane $\mathbb R^2$, the step size is 1, and points are drawn at infinity by picking a ray from the origin with a uniformly random direction. They prove in this model that the perimeter of the convex hull of tree grows linearly in time which implies that the maximum distance from the tree to the origin also grows linearly in time. Here, we consider a related question: how long does it take an \textsc{rrt} to reach a convex region of positive probability?

While a number of such regions can be chosen, for ease of calculation we consider the case that we have an \textsc{rrt} on $[0,1]^2$ and the region we want to reach is the right half of the square, $\left[\frac12,1\right] \times [0,1]$ when $X_0 = (0,0).$ Similar arguments extend to the general convex region.

\begin{thm}
For an \textsc{rrt} $T_\epsilon$ on $[0,1]^2$, the expected time to reach $\left[\frac12,1\right] \times [0,1]$ is $O\left(\epsilon^{-\frac32}\right)$ as $\epsilon \to 0$.
\end{thm}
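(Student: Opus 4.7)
I would track the rightmost $x$-coordinate of the tree, $M_n^x := \max_{0 \le i \le n} X_i \cdot e_1$, and bound $\mathbb{E}[N]$ for $N := \inf\{n : M_n^x \geq 1/2\}$. Since $M_n^x$ is non-decreasing, the plan is to show that its one-step drift satisfies
\[
\mathbb{E}[M_{n+1}^x - M_n^x \mid \mathcal{F}_n] \;\geq\; c\,\epsilon^{3/2}
\]
for some absolute $c > 0$ whenever $M_n^x < 1/2$. An optional-stopping argument then gives $c\,\epsilon^{3/2}\,\mathbb{E}[N] \leq \mathbb{E}[M_N^x] \leq 1$, forcing $\mathbb{E}[N] = O(\epsilon^{-3/2})$.

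The drift bound reduces to showing that a single step advances $M_n^x$ by $\Omega(\epsilon)$ with probability $\Omega(\sqrt{\epsilon})$. To obtain this I would work inside a thin horizontal tube $S := [0,1] \times [0,h]$ with $h := c_1\sqrt{\epsilon}$, which contains the initial vertex $(0,0)$. The event ``$Y_{n+1} \in S$'' --- call it a \emph{strip hit} --- has probability exactly $c_1\sqrt{\epsilon}$ per step; the central claim is that each strip hit advances $M_n^x$ by $\Omega(\epsilon)$ with a constant probability independent of $\epsilon$.

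The scale $h \asymp \sqrt{\epsilon}$ is calibrated by two opposing pressures: narrower tubes are hit less often (probability $h$ per step), while wider tubes are needed to keep the tree's in-strip tip from wandering out the top of $S$. After $k$ rightward advances of size $\epsilon$, the tip's transverse standard deviation is $O(\epsilon\sqrt{k})$, which fits inside $S$ precisely while $k \leq 1/\epsilon$ --- exactly the budget of advances needed to traverse distance $1/2$. I would maintain an inductive invariant that the tree contains an $\epsilon$-chain inside $S$ from $(0,0)$ to a rightmost in-strip tip $(x^*,y^*)$, and argue that each strip hit landing just to the right of the tip has the tip as its nearest tree vertex, producing a new vertex with $x$-coordinate $x^* + \Omega(\epsilon)$.

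The hard part is ruling out ``escape'' events in which tree growth outside $S$ supplies a competitor nearest vertex for some $Y_{n+1} \in S$, pulling the new vertex out of the tube and breaking the invariant. A union bound over the $O(\epsilon^{-3/2})$ steps, exploiting the fact that out-of-strip vertices are vertically separated from in-strip tips by at least $h/2$ once the transverse excursion is controlled, should bound the cumulative escape probability by a small constant. Coupling this invariant maintenance with the probability-of-advance claim --- so that the drift bound degrades by only a constant factor --- is where I expect the technical work to concentrate.
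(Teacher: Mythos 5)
Your quantitative skeleton matches the paper exactly: both arguments rest on showing that each step advances the rightmost $x$-coordinate by $\Omega(\epsilon)$ with probability $\Omega(\sqrt\epsilon)$, and both convert this into an $O(\epsilon^{-3/2})$ bound (your optional-stopping step versus the paper's ``number of advances times expected waiting time per advance''). The gap is in the central geometric claim, which you correctly identify as the hard part but do not actually establish. Your plan hinges on maintaining an $\epsilon$-chain inside the fixed tube $S=[0,1]\times[0,h]$ and arguing that out-of-strip vertices stay vertically separated from the in-strip tip by $h/2$. That separation claim will not hold: over $\Theta(\epsilon^{-3/2})$ steps, on the order of $\epsilon^{-1}$ uniform points land inside $S$ itself and far more land just above it, so the tree accumulates vertices at arbitrary heights in and around the tube, and there is no mechanism forcing the nearest tree vertex of a strip hit near the tip to be the tip. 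The union bound over escape events, as sketched, has nothing small to sum.

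The paper avoids this difficulty entirely by anchoring the analysis at the \emph{globally} rightmost vertex $v$ rather than at a tip inside a fixed strip, and by arguing through Voronoi regions rather than by excluding competitors. Concretely, it takes a thin triangle $\Lambda$ with apex $v$ and base of height $\delta=\tfrac12\sqrt\epsilon$ on the right edge, and shows that any vertex $w$ whose Voronoi region meets $\Lambda$ must lie in a circular cap of depth $\sqrt{\delta^2+z^2}-z=O(\epsilon)$ behind $v$ (where $z\ge\tfrac12$ is $v$'s distance to the right edge). The point is that competitors need not be ruled out: since $v$ is rightmost, any competitor is itself within $O(\epsilon)$ of being as far right as $v$, so the step taken from it toward $Y_i\in\Lambda$ still yields $\Omega(\epsilon)$ of net rightward progress. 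This makes the per-step claim self-contained --- it requires no inductive invariant about the tree's history and no control of the tree's geometry away from the leading edge. To repair your argument you would either need to import this Voronoi-cap observation (at which point the strip is unnecessary) or find some other way to show that nearest-neighbour competition near the tip cannot destroy the advance, which is precisely the content you have deferred.
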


\begin{proof}
We will follow the vertex with greatest $x$-value through the whole process. Consider the Voronoi region of this point. There are two cases: either there are multiple vertices with the same $x$-value or there is a single vertex with greatest $x$-value. If there are multiple vertices with the same $x$-value then then we have a greater chance of useful progression so we focus on the case of a unique vertex.

Suppose we have a unique vertex with greatest $x$-value, $v$. Consider any other vertex in the tree, $w$. This vertex only impacts the Voronoi region of $v$ by the perpendicular bisector of the line between $v$ and $w.$ Following this, we look at where $v$'s Voronoi region intersects the right edge of the box, $\{1\} \times [0,1]$ and see that it does so at the edge of the Voronoi region of some other vertex. Also, we see that the Voronoi region contains a triangle from the vertex $v$ to the intersections of its Voronoi region with $\{1\} \times [0,1]$.

\begin{figure}
\centering
\includegraphics[width=0.45\textwidth]{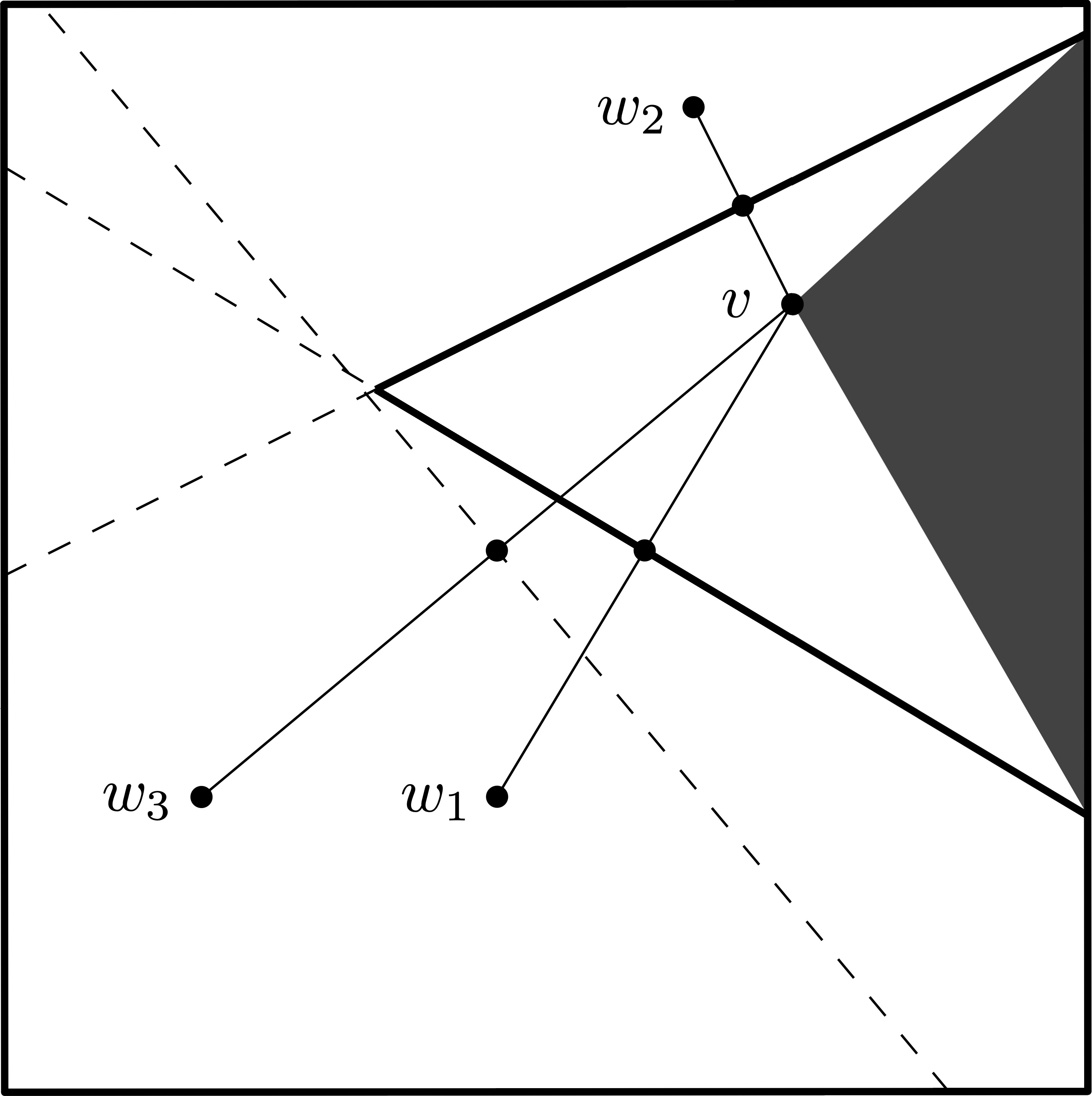}
\caption{The Voronoi region of $v$ made of its pairwise Voronoi regions.}
\end{figure}

Our goal is to find a probability of progress which we will do by looking at this triangle---more specifically by considering triangles and considering when the Voronoi region would contain this triangle. In the worst case scenario, $v$ is on the upper or lower edge of $[0,1]^2$, so without loss of generality, we take $v$ to be on the upper edge.

Consider the triangle $\Lambda$ with points $v, (1,1),$ and $(1,1-\delta)$ for some $\delta > 0$. The first question to ask is when will the Voronoi region of $v$ not contain $\Lambda$? We find the region where any vertex other that $v$ inside will have a Voronoi region intersecting $\Lambda$. Observe that for any point $w$ on the boundary of this region, the perpendicular bisector of the line between $v$ and $w$ must pass through $(1,1-\delta)$. Writing
\begin{align*}
    w = v - (x,y) = (1-z - x,1 - y),
\end{align*}
consider the similar triangles
\begin{align*}
    \left\{v, v - \left(\frac x2, \frac y2\right), v - \left(0, \frac y2\right)\right\}
\end{align*}
and
\begin{align*}
    \left\{(1, 1 - \delta), \left(1, 1 - \frac y2\right), v - \left(\frac x2, \frac y2\right) \right\}.
\end{align*}
We have the equality
\begin{align*}
    \frac xy = \frac{\delta - \frac y2}{z + \frac x2},
\end{align*}
which implies that the boundary points $w = (x_0, y_0)$ all lie on the circle $\mathcal C$ whose points satisfy
\begin{align*}
    &\left(x + z\right)^2 + \left(y - \delta\right)^2 \\
    =\ &\left(x_0-1\right)^2 + (y_0 - 1 + \delta)^2 \\
    =\ &\delta^2 + z^2,
\end{align*}
which is the circle centered at $(1,1-\delta)$ with radius $\sqrt{\delta^2 + z^2}$.

\begin{figure}
\centering
\includegraphics[width=0.7\textwidth]{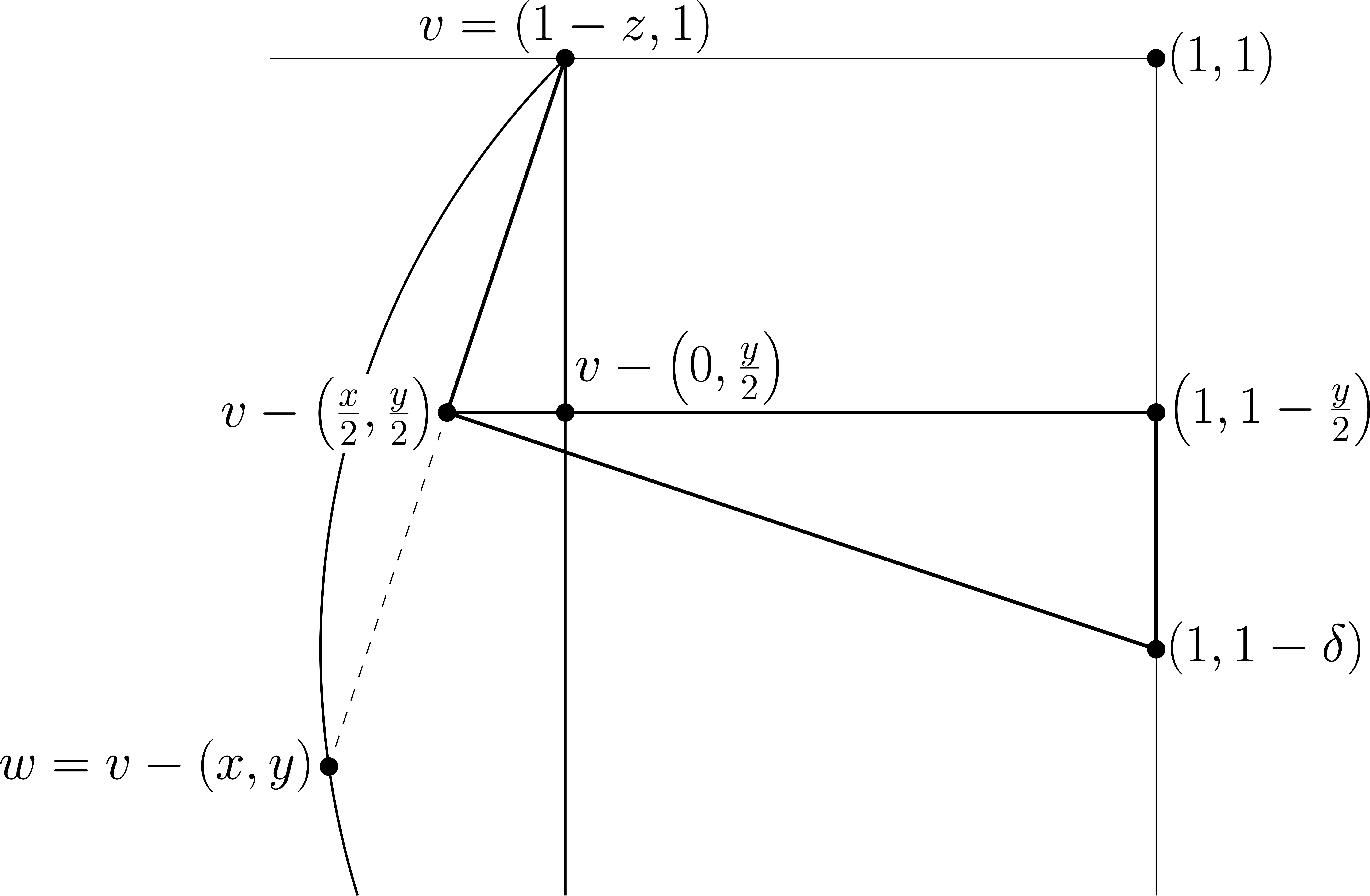}
\caption{The similar triangles $\left\{v, v - \left(\frac x2, \frac y2\right), v - \left(0, \frac y2\right)\right\}$ and $\left\{(1, 1 - \delta), \left(1, 1 - \frac y2\right), v - \left(\frac x2, \frac y2\right) \right\}$.}
\end{figure}

The area that could interfere with this circle is then a cap with height
\begin{align*}
    h(\delta,z) = \sqrt{\delta^2 + z^2} - z.
\end{align*}
Note that $\frac12 \leq z \leq 1$, and that $h(\delta)$ is decreasing in $z$, so we may upper bound $h(\delta,z)$ with
\begin{align*}
    h(\delta,z) \leq \sqrt{\delta^2 + \frac14} := h(\delta).
\end{align*}

Now we pick $\delta$ in such a way that any time we select a point in $\Lambda$ we make $\Theta(\epsilon)$ progress. Set $\delta = \frac12 \sqrt\epsilon$ and take $\epsilon < \frac1{16}$. Suppose our $i$-th uniform vertex $Y_i \in \Lambda$. The easy case is that $X_{\gamma_i} = v$ in which case we make progress of at least $\frac1{\sqrt2} \epsilon.$ The more subtle case is if $v \not= X_{\gamma_i}$. Then $X_{\gamma_i}$ is in the circle $\mathcal C$ and since $v$ is the most advanced point, $X_{\gamma_i}$ is in fact in the cap. $X_i$ is at least $\frac1{\sqrt2}\epsilon$ further than $X_{\gamma_i}$, but how much progress have we actually made? We must analyse the difference between the height of the cap and $Z_i$'s progress.

We will show that the height of the cap is less than $\frac12 \epsilon$ by a constant fraction of $\epsilon$. We derive $\frac1{\sqrt2}\epsilon - h\left(\frac12\sqrt\epsilon\right)$ to get
\begin{align*}
    &\frac{d}{d\epsilon} \left(\frac1{\sqrt2}\epsilon - \sqrt{\frac14 \epsilon + \frac14}\right) \frac1{\sqrt2} - \frac1{8 \sqrt{\frac14 \epsilon + \frac14}}.
\end{align*}
This is increasing in $\epsilon$ and at $0$ it is $\frac1{\sqrt2} - \frac14$, so it follows that we progress by at least $\left(\frac1{\sqrt2} - \frac12\right) \epsilon.$

\begin{figure}
\centering
\includegraphics[width=0.6\textwidth]{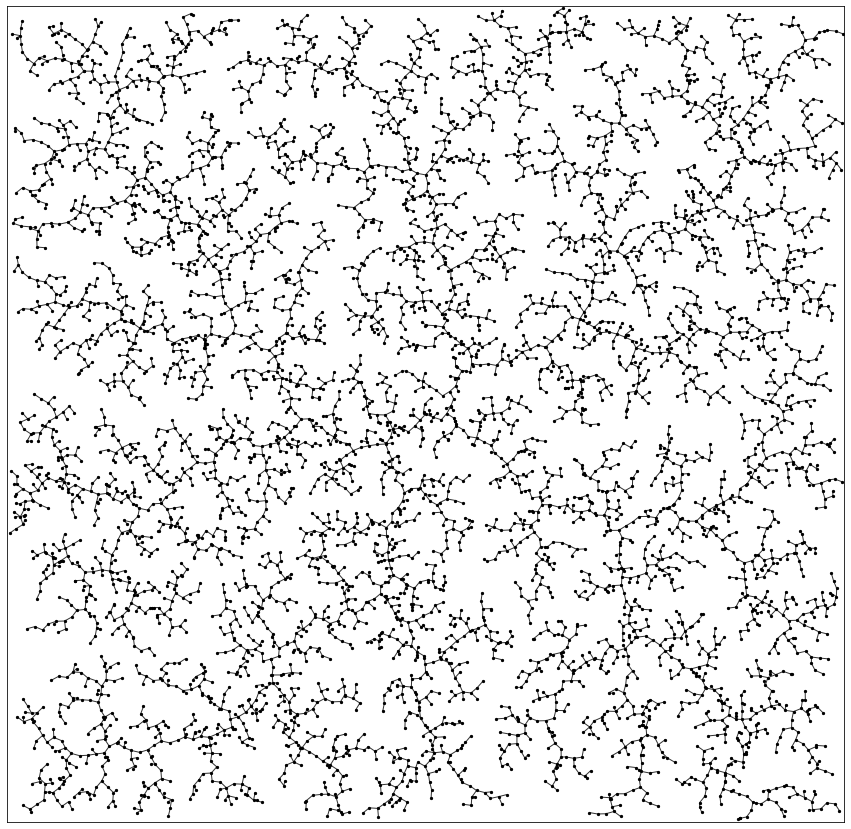}
\caption{A vertex $w$ inside the cap whose Voronoi region cuts into $\Lambda$.}
\end{figure}

This shows that it suffices to select a point in $\Lambda$ at least $\frac1{\left(\frac1{\sqrt2} - \frac14\right)\epsilon}$ times to reach the right half of the unit square. Meanwhile, the area of $\Lambda$ is at least $\frac14 \sqrt\epsilon$ so the expected time to select a point in $\Lambda$ is less than $\frac4{\sqrt\epsilon}$. It follows that the expected time to reach $\left[\frac12,1\right] \times [0,1]$ is less than
\begin{align*}
    \frac1{\left(\frac1{\sqrt2} - \frac12\right)\epsilon} \cdot \frac4{\sqrt\epsilon} = O\left(\epsilon^{-\frac32}\right).
\end{align*}
\end{proof}

It is worth remarking that if we repeat the analysis in $[0,1]^d$ and wish to reach $\left[\frac12,1\right] \times [0,1]^{d-1}$ the argument above holds with slightly different constants.

The question remains whether this upper bound is tight or not. One can imagine that in expectation either some finite number of branches shoot out rapidly towards the right half of $[0,1]^2$---in this case we would have in fact $\Theta(\epsilon^{-1})$---or that the tree will progress with tendrils approximately $\sqrt\epsilon$ apart from each other moving forward evenly---this being the $\Theta\left(\epsilon^{-\frac32}\right)$ case. The question remains open.

\section{Phase 2---Covering Times}

\begin{figure}[ht]
\centering
\includegraphics[width=0.7\textwidth]{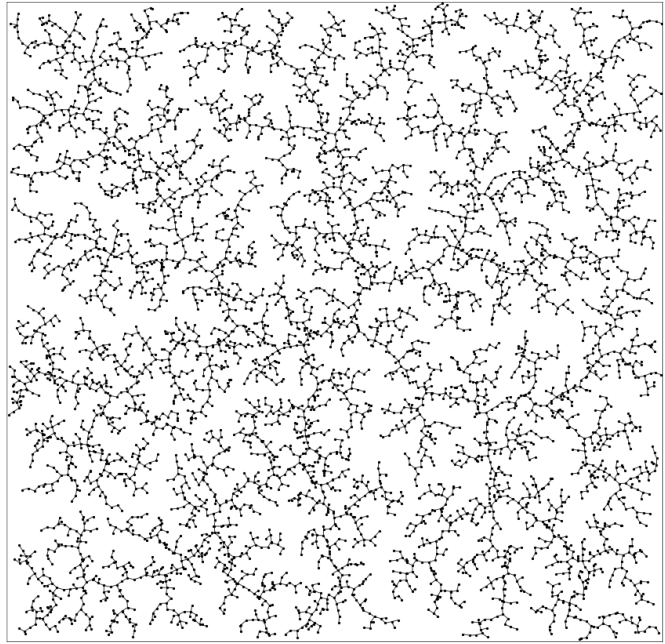}
\caption{An \textsc{rrt} shortly before covering. Some parts of the space are still uncovered, while others have begun nearest neighbour behaviour.}
\end{figure}

\subsection{Covering $[0,1]^d$}
We turn our attention to the covering time for $M = [0,1]^d$ in the Euclidean metric. This gives the expected run time of the algorithm to some measure of success. The following analysis does not depend on the initial vertex $v$ so we drop it. First we look at the expected covering time for an \textsc{rrt} on $[0,1]^d$.

\begin{thm}
The expected covering time of $[0,1]^d$ by an \textsc{rrt} $T_\epsilon$ is

\begin{align*}
    \mathbb E \left[\tau(T_\epsilon)\right] = \Theta\left(\frac1{\epsilon^d} \log \frac1\epsilon\right).
\end{align*}
\end{thm}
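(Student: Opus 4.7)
The theorem requires matching upper and lower bounds of order $\epsilon^{-d}\log(1/\epsilon)$, both at the scale of coupon collector on $\Theta(\epsilon^{-d})$ equal-mass cells.

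For the lower bound, I would fix a maximal packing $c_1,\dots,c_N\in[0,1]^d$ with pairwise distances greater than $2\epsilon$, giving $N=\Theta(\epsilon^{-d})$ centers whose $\epsilon$-balls are pairwise disjoint. Each $c_k$ must lie within $\epsilon$ of some tree vertex for covering to occur, so $\tau$ is at least the first time every $B(c_k,\epsilon)$ contains a vertex. The key estimate is a uniform bound $P(X_i\in B(c_k,\epsilon)\mid\mathcal F_{i-1})\le C\epsilon^d$. On $\{X_i=Y_i\}$ this is simply $\mathrm{vol}(B(c_k,\epsilon))$; on $\{X_i\ne Y_i\}$ the vertex $X_i$ lies on the sphere of radius $\epsilon$ about $X_{\gamma_i}$, so $X_i\in B(c_k,\epsilon)$ forces $|X_{\gamma_i}-c_k|\le 2\epsilon$ and the direction $(Y_i-X_{\gamma_i})/|Y_i-X_{\gamma_i}|$ into a spherical cap; summing cap contributions over tree vertices in $B(c_k,2\epsilon)$ via a Voronoi decomposition gives the same $O(\epsilon^d)$ bound. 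A coupon-collector lower bound on $N$ cells each with hit probability at most $C\epsilon^d$ then yields $E[\tau]=\Omega(\epsilon^{-d}\log N)=\Omega(\epsilon^{-d}\log(1/\epsilon))$.

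For the upper bound, I would partition $[0,1]^d$ into $N=\Theta(\epsilon^{-d})$ sub-cubes of side $\epsilon/\sqrt d$, each of diameter at most $\epsilon$; once every sub-cube contains a tree vertex the tree is $\epsilon$-covering. The plan is a two-stage argument. First, a bootstrap via doubling of scale: show that from any $r$-dense tree the expected time to $(r/2)$-density is $O(r^{-d})$, and iterate $O(\log(1/\epsilon))$ times from $r_0=\sqrt d$ down to $r=\epsilon$; the geometric sum is dominated by its last term, so the tree becomes ``$\epsilon$-reachable'' (every sub-cube has a tree vertex within $\epsilon$) in $O(\epsilon^{-d})$ expected steps. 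Second, once this holds, any uniform sample $Y_i\in Q_k$ is within $\epsilon$ of the tree and so $X_i=Y_i\in Q_k$; each sub-cube is thereafter hit with probability $\Theta(\epsilon^d)$ per step, and a standard coupon-collector upper bound on $N$ cells contributes $O(\epsilon^{-d}\log N)=O(\epsilon^{-d}\log(1/\epsilon))$ further steps.

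The main obstacle on both sides is the non-uniform distribution of $X_i$: when $X_i\ne Y_i$, $X_i$ is constrained to a sphere around the nearest tree vertex, and unpacking the resulting geometry is where most of the work goes. For the lower bound this shows up as the Voronoi decomposition together with a spherical-cap area computation; for the upper bound the analogous difficulty is the doubling-of-scale step, which generalises the Phase-1 reach analysis from one positive-probability region to every scale in $[0,1]^d$ simultaneously. Handling correlations between per-cell events, both in the lower-bound maximum and in the upper-bound union bound, is routine once the per-cell estimates are in place.
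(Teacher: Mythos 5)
Your overall plan (coupon collector over $\Theta(\epsilon^{-d})$ cells in both directions) is the right shape, but both of your key estimates have genuine gaps, and in each case the paper uses a different device precisely to avoid the difficulty you run into. For the lower bound, the claimed uniform estimate $\mathbb P\left[X_i \in B(c_k,\epsilon)\mid \mathcal F_{i-1}\right] \le C\epsilon^d$ is false. When $X_i \ne Y_i$, the set of samples $Y_i$ that steer the new vertex into $B(c_k,\epsilon)$ is the union, over vertices $X_j$ with $|X_j-c_k|\le 2\epsilon$, of the Voronoi cell of $X_j$ intersected with a cone issuing from $X_j$; that cone can subtend a constant fraction of the sphere (for $|X_j-c_k|=1.5\epsilon$ in the plane the half-angle is about $41^\circ$), and early in the process the Voronoi cell of $X_j$ can occupy a constant fraction of the cube, so a single cell--cone intersection already has constant measure rather than $O(\epsilon^d)$. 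Your proposed Voronoi-plus-cap summation does not repair this. The paper instead counts \emph{critical steps} (steps adding new area to the cover) and works backwards from covering: when $\ell$ critical steps remain, the uncovered region is contained in a union of $\ell$ balls of radius $\epsilon$, and the key observation is that a sample $Y_i$ triggering a critical step is either covered (so $X_i=Y_i$ must lie within $2\epsilon$ of the uncovered set) or is \emph{itself} an uncovered point and hence lies in that small union. This bounds the per-step probability of the $\ell$-th-from-last critical step by $\ell\, 2^d/\beta$ with no per-cell, all-times estimate needed.

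For the upper bound, your doubling-of-scale bootstrap (``from $r$-dense to $(r/2)$-dense in $O(r^{-d})$ expected steps'') is a substantial unproven claim: at scale $r\gg\epsilon$ a sample landing in an empty $r$-cell only advances the tree by $\epsilon$ toward it, so there is no coupon-collector structure at that scale, and the claim is essentially a multi-scale version of the Phase 1 reach problem, for which the paper only obtains a non-tight $O(\epsilon^{-3/2})$ bound for a single region in $d=2$. The paper's proof makes the bootstrap unnecessary via a short lemma: with cells of side $\epsilon/\sqrt d$, if $Y_i$ lies in the union of empty cells then so does $X_i$ (otherwise, by the triangle inequality, the vertex already occupying $X_i$'s cell would be strictly closer to $Y_i$ than $X_{\gamma_i}$). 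Hence from time zero the number of occupied cells increases exactly when $Y_i$ hits an empty cell, and the entire upper bound is a single coupon-collector computation. Finding that lemma, rather than proving the bootstrap, is the missing ingredient.
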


\begin{proof}
We first find a lower bound for $\mathbb E \left[\tau(T_\epsilon)\right]$. We consider a critical subset of the time steps: the time steps where new area is added to the cover. We work backwards from covering. Our goal is to lower bound the amount of time required for each critical step.

Define $\beta$ to be the inverse of the volume of the $d$-dimensional ball with radius $\epsilon$:
\begin{align*}
\beta := \frac{\Gamma\left(\frac d2 + 1\right)}{\epsilon^d \pi^\frac d2}, \quad \overline{\beta} := \left \lfloor \beta \right \rfloor.
\end{align*}
The maximum amount of space we cover at any critical step is $\frac1\beta$ so the number of critical steps is at least $\overline \beta$. As well, we will argue that the probability of achieving the $\ell$-th last critical step is bounded by $\ell \frac{2^d}\beta$.

Consider the final critical step of any \textsc{rrt}. The area added in this step is a possibly uncountable set of points contained in an $\epsilon$-ball. Letting $i$ be any time before the final critical step and after the penultimate critical step, we note that we may only have a critical step if $Y_i$ is within $\epsilon$ of an uncovered point. This implies that the set of points that could lead to a critical step is contained in a $2\epsilon$-ball.

This argument can be repeated inductively---at any time the set of points that could lead to a critical step is contained in a union of $2\epsilon$-balls. To be precise, the time taken for the $\ell$-th last critical step is lower bounded by the geometric random variable with parameter $\ell\frac{2^d}\beta$.
\begin{align*}
    \mathbb E \left[\textnormal{Geo}\left(\ell\frac{2^d}\beta\right)\right] = \frac\beta{2^d}\cdot\frac1\ell.
\end{align*}

It follows that the covering time can be bounded below by
\begin{align*}
    \mathbb E \left[\tau(T_\epsilon)\right] \geq \frac\beta{2^d} \sum_{\ell=1}^{\overline \beta} \frac1\ell.
\end{align*}
This gives a bound using the $\beta$-harmonic number
\begin{align*}
    \mathbb E \left[\tau(T_\epsilon)\right] &\geq \frac\beta{2^d} H_{\overline \beta} \sim \frac\beta{2^d} \log \beta = \Omega\left(\frac1{\epsilon^d} \log \left(\frac1\epsilon\right)\right).
\end{align*}
\end{proof}

For an upper bound we consider a grid over $[0,1]^d$ where each grid cell is a cube has side length $\frac\epsilon{\sqrt d}$. The time required for the \textsc{rrt} to have a vertex in each grid cell is greater than the covering time. To find upper bound we need the following lemma to bound the time required for a new cell to have a vertex.

\begin{lem}
Let $T_\epsilon$ be an \textsc{rrt} on $[0,1]^d$ with an $\frac\epsilon{\sqrt d}$ grid over it. Let $S_i$ be the union of the cells which do not contain a vertex of $T_\epsilon$ at time $i-1$. Then $X_i \in S_i$ if and only if $Y_i \in S_i$.
\end{lem}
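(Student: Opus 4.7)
The plan is to exploit the geometric fact that a cube of side $\epsilon/\sqrt d$ has diameter exactly $\epsilon$, which lets me translate ``within distance $\epsilon$ of a vertex'' into ``in the same grid cell as a vertex.''

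First I would handle $Y_i \notin S_i \Rightarrow X_i \notin S_i$. If the cell containing $Y_i$ already holds some vertex $X_k$ with $k \leq i-1$, then $d(Y_i, X_k) \leq \epsilon$ by the diameter bound, so $\min_{0 \leq j \leq i-1} d(Y_i, X_j) \leq \epsilon$. The \textsc{rrt} definition then selects the first branch and $X_i = Y_i$, which lies in the same non-empty cell; hence $X_i \notin S_i$.

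For the converse $Y_i \in S_i \Rightarrow X_i \in S_i$, the sub-case $X_i = Y_i$ is immediate. Otherwise $d(Y_i, X_{\gamma_i}) > \epsilon$ and $X_i$ is the point at distance $\epsilon$ from $X_{\gamma_i}$ along the segment to $Y_i$. Suppose toward contradiction that the cell of $X_i$ contains a vertex $X_k$ with $k \leq i-1$, so $d(X_i, X_k) \leq \epsilon$. Combining the triangle inequality
\begin{align*}
    d(Y_i, X_k) \leq d(Y_i, X_i) + d(X_i, X_k) \leq (d(Y_i, X_{\gamma_i}) - \epsilon) + \epsilon = d(Y_i, X_{\gamma_i})
\end{align*}
with the minimality $d(Y_i, X_{\gamma_i}) \leq d(Y_i, X_k)$ forces every inequality to be an equality. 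Equality in the triangle inequality places $X_k$ on the line through $Y_i$ and $X_i$, on the far side of $X_i$, at distance exactly $\epsilon$ from $X_i$; but $X_{\gamma_i}$ occupies exactly that position, so $X_k = X_{\gamma_i}$.

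The main obstacle is the residual configuration this leaves behind, namely that $X_{\gamma_i}$ and $X_i$ share a cell of diameter $\epsilon$ while lying at distance exactly $\epsilon$, which forces them to sit at diagonally opposite corners. Since $Y_i$ has an absolutely continuous distribution and $X_i$ depends continuously on $Y_i$, this degenerate event has probability zero, so the biconditional holds almost surely, which is adequate for the covering-time upper bound the lemma will be used to derive.
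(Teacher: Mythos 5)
Your proof is correct and rests on the same core argument as the paper's: if $X_i$ landed in a cell already containing a vertex $X_k$, the triangle inequality through $X_k$ would contradict the minimality of $d(Y_i, X_{\gamma_i})$. You are in fact more careful than the paper, which silently asserts the strict bound $d(X_k, X_i) < \epsilon$ and thereby skips the degenerate opposite-corners configuration that you isolate via the equality analysis and dispose of with a measure-zero argument (at the cost of an ``almost surely'' qualifier, which the covering-time application tolerates); you also prove the direction $Y_i \notin S_i \Rightarrow X_i \notin S_i$ explicitly, which the paper leaves implicit.
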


\begin{proof}
Suppose that at some time $i,$ the cube $Y_i \in S_i$ and $X_i \notin S_i$. Clearly $X_i \not= Y_i$ so $d(X_{\gamma_i}, Y_i) = \epsilon + d(X_i,Y_i)$. But since $X_i \in S_i^c$, it belongs to a cell which contains $X_j$ for some $j < i$. It follows that

\begin{align*}
    d(X_j, Y_{i})& \leq d(X_j, X_{i}) + d(X_{i}, Y_{i}) \\
    &< \epsilon + d(X_i, Y_{i}) \\
    &= d(X_{\gamma_i}, Y_{i}).
\end{align*}

This contradicts the definition of $X_{\gamma_i}$ so we have proven the claim.
\end{proof}

\begin{figure}
\centering
\includegraphics[width=0.6\textwidth]{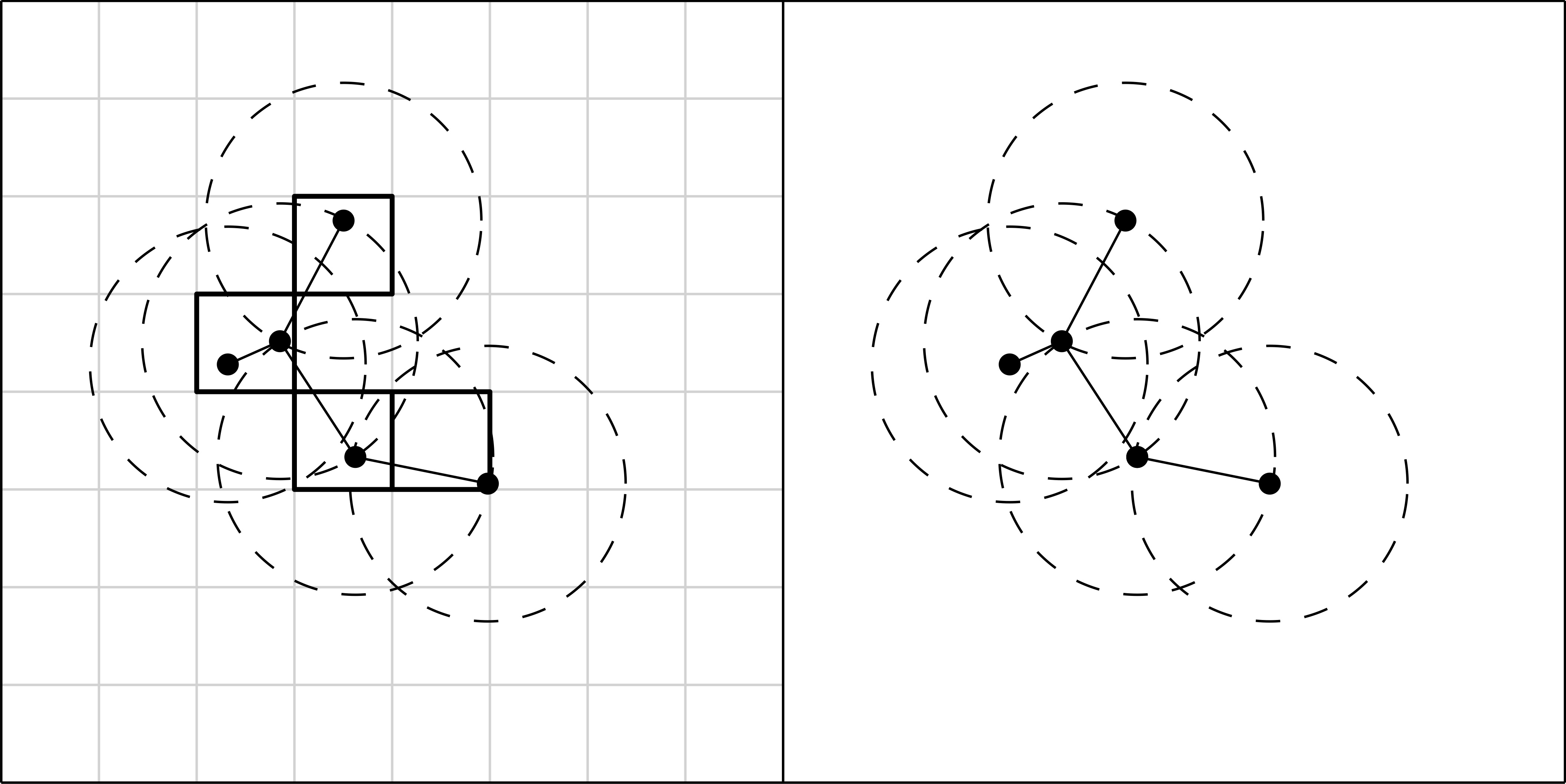}
\caption{On the right we see a section of the \textsc{rrt} and the $\epsilon$-balls that it covers. On the left the space is covered with a grid and the covered sections of the grid are bolded.}
\end{figure}

Now we continue to the upper bound.

\begin{proof}
For the $\frac\epsilon{\sqrt d}$ grid, each cell is covered by any vertex inside it. Thus the time required to have a vertex in each grid cell upper bounds $\mathbb E \left[\tau(T_\epsilon)\right]$. Define $\alpha$ to be the inverse of the volume of a cell:

\begin{align*}
    \alpha := \frac{d^\frac d2}{\epsilon^d}, \quad \overline \alpha = \left \lceil \alpha \right \rceil.
\end{align*}

Suppose $\ell$ grid cells contain a vertex. Then, by our previous lemma, the time to add a vertex to a new cell is the time to draw a uniform random point in a new cell. This is geometric with parameter
\begin{align*}
    \left(1- \frac\ell\alpha\right) = \left(\frac1\alpha \left(\alpha - \ell\right)\right)
\end{align*}
which has expectation
\begin{align*}
    \mathbb E \left[\textnormal{Geo}\left(\frac1\alpha \left(\alpha - \ell\right)\right)\right] = \frac\alpha{\alpha - \ell}.
\end{align*}
It follows that
\begin{align*}
    \mathbb E \left[\tau(T_\epsilon)\right] \leq \alpha \sum_{\ell=1}^{\overline \alpha} \frac1{\alpha - \ell}.
\end{align*}

Again, the $\alpha$-harmonic number gives an asymptotic bound
\begin{align*}
    \mathbb E \left[\tau(T_\epsilon)\right] &\leq \alpha H_{\overline \alpha} \sim \alpha \log \alpha = O\left(\frac1{\epsilon^d} \log \left(\frac1\epsilon\right)\right).
\end{align*}
Thus, we have both inequalities.
\end{proof}

Notice that this result also holds for modified \textsc{rrt} algorithms which maintain the property that the point drawn in an uncovered cube will cover some new cube.

As well, note that the lower and upper bounds given by this proof are asymptotically
\begin{align*}
	\frac\beta{2^d} \log \beta &= \frac{d \cdot \Gamma\left(\frac d2 + 1\right)}{2^d \pi^\frac d2 \epsilon^d} \log \left(\frac1\epsilon\right) \quad + \frac{\Gamma\left(\frac d2 + 1\right)}{2^d \pi^\frac d2 \epsilon^d} \log \left(\frac{\Gamma\left(\frac d2 + 1\right)}{\pi^\frac d2}\right),
\end{align*}
and
\begin{align*}
	\alpha \log \alpha = \frac{d^{\frac d2 + 1}}{\epsilon^d} \log \left(\frac1\epsilon\right) + \frac{d^{\frac d2 + 1}}{2\epsilon^d} \log d,
\end{align*}
respectively.

Consider the lower bound. If we fix $\epsilon < \frac1{2\pi}$ and then let $d \to \infty$, the run time is $e^{\Omega(d \log d)}$, confirming that the run time is at least exponential in the dimension of the space \cite{dalibard2009control}.

\section{Coupon Collector Problem}
It is worth noticing that our upper bound argument is very similar to a classical problem. We recall the Coupon Collector Problem \cite{motwani1995randomized, flajolet1992}:

\begin{thm}
Suppose there exist $n$ different types of coupons and each time a coupon is drawn it is drawn independently and uniformly at random from the $n$ classes, i.e., for each coupon drawn and for each $i \in 1,..,n$ the probability that the coupon is of the $i$-th type is $\frac1n$. The expected time $T$ to have one of each type of coupon is asymptotically $n \log n$.
\end{thm}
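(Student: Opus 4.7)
The plan is to decompose the total collection time into a sum of independent geometric waiting times and then apply the harmonic-number asymptotics. Let $T_i$ denote the number of additional draws needed to obtain a new type of coupon after exactly $i-1$ distinct types have already been collected, so that $T = T_1 + T_2 + \cdots + T_n$. The independence across the $T_i$ follows from the memoryless draw model.

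The first key step is to identify the distribution of each $T_i$. Once $i-1$ distinct types have been collected, a fresh draw yields a new type with probability $(n - i + 1)/n$, since exactly $n - i + 1$ of the $n$ classes remain uncollected. Hence $T_i$ is geometric with parameter $(n - i + 1)/n$, and
\begin{align*}
    \mathbb{E}[T_i] = \frac{n}{n - i + 1}.
\end{align*}

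The second step is linearity of expectation. Summing and reindexing with $k = n - i + 1$ gives
\begin{align*}
    \mathbb{E}[T] = \sum_{i=1}^n \frac{n}{n - i + 1} = n \sum_{k=1}^n \frac{1}{k} = n H_n.
\end{align*}
The final step is to invoke the standard asymptotic $H_n = \log n + \gamma + o(1)$, which immediately yields $\mathbb{E}[T] \sim n \log n$ as $n \to \infty$.

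There is no real obstacle here; the argument is entirely mechanical once the decomposition is set up. The only subtlety worth flagging is justifying that the $T_i$ are independent geometric variables, which follows because conditioning on the set of currently collected types does not change the distribution of future draws, only the success probability for acquiring the next novel type. This is the same style of waiting-time analysis that drove the upper bound in the preceding covering-time theorem, which is precisely the parallel the authors are drawing attention to.
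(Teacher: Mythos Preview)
Your proof is correct and is the standard argument for the coupon collector problem. The paper does not actually supply its own proof of this theorem: it is stated with citations to Motwani--Raghavan and Flajolet et al.\ as a classical result, and the surrounding text simply remarks on the parallel with the upper-bound computation for $\mathbb{E}[\tau(T_\epsilon)]$. Your decomposition into geometric waiting times and summation to $nH_n$ is exactly the argument those references contain, and it mirrors precisely the structure of the paper's grid-cell upper bound in the preceding section.
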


In our upper bound argument for the covering time of the graph we have a scenario very similar to the Coupon Collector Problem. We have divided the space into a number of cubes that we need to cover, however in our case the selection of a new cube does not guarantee that it is covered, just that some new cube is covered. In this way we see it does not matter that we are not adding the randomly selected point to our tree.

\section{A Connecting Lemma---Nearest Neighbour Trees}
We turn our attention to the growth of the tree after the space has been covered. From this time onward, every point of $M$ is within distance $\epsilon$ of the tree, so we will have $X_i = Y_i$ after coverage. Our analysis between the growth of the \textsc{rrt} in this phase depends on the relationship between the \textsc{rrt} and the Nearest Neighbour Tree (\textsc{nnt}).

\begin{dfn}
An \textsc{nnt} is a sequence of vertex and edge random variable pairs
\begin{align*}
    T_{\textsc{nn}} = T_{\textsc{nn}}(M, v) = ((X_i,E_i))_{i \in \mathbb N}
\end{align*}
defined on a compact metric space $M$ with an initial vertex $v = X_0$ and $E_0 = \emptyset$. Each $(X_i,E_i), i \geq 1$ is generated in the following way: take a sequence of uniform $M$-valued random variables $(X_i)_{j=1}^\infty$ and let 
\begin{align*}
    \gamma_i = \textnormal{argmin}_{0 \leq j \leq i-1} d(X_i,X_j).
\end{align*}
Then $X_i$ is added to the tree via the edge
\begin{gather}
    E_i := (X_{\gamma_i},X_i).
\end{gather}
An \textsc{nnt} at time $n \in \mathbb N$ is the subset $T_\textsc{nn}^n(M,v)$ where
\begin{align*}
    T_\textsc{nn}^n = T_\textsc{nn}^n(M, v) = ((X_i,E_i))_{0 \leq i \leq n}.
\end{align*}
\end{dfn}

Now we define a nearest neighbour process which grows onto a given tree $\Lambda$---note that $\Lambda$ may be a random tree itself. The purpose of this is to ultimately view an \textsc{rrt} $T_\epsilon$ as the growth of a nearest neighbour tree onto $T_\epsilon^{\tau(T_\epsilon)},$ the \textsc{rrt} at covering time.

\begin{figure}
\centering
\includegraphics[width=0.8\textwidth]{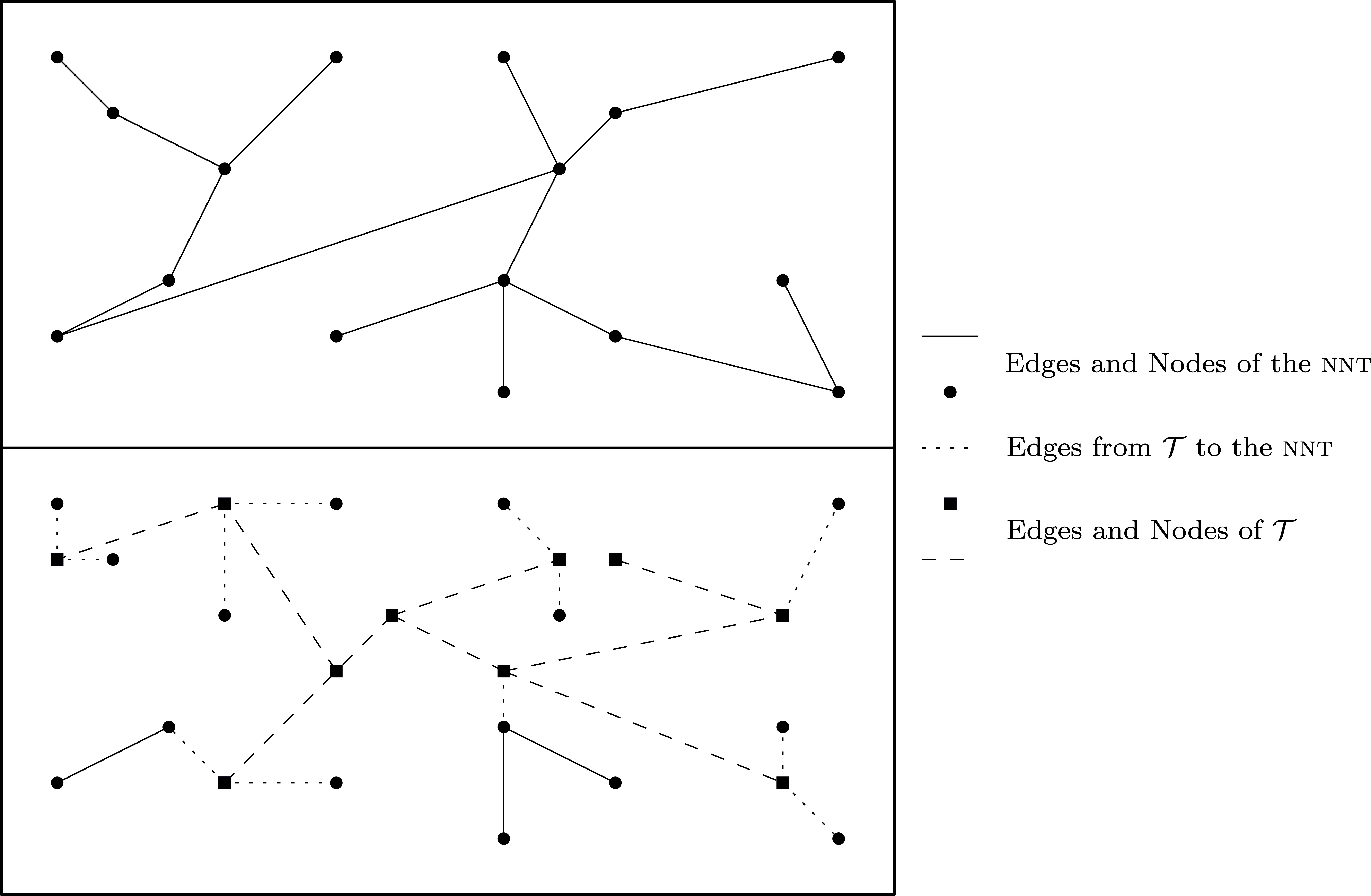}
\caption{On top is an \textsc{nnt}. On the bottom is the \textsc{nnt} grown onto another tree $\Lambda$.}
\end{figure}

\begin{dfn}
Let $S$ be a random variable in $\mathbb N \cup \{\infty\}$ and let $\Lambda$ be a random tree embedded in a metric space $M$ with vertices $(V_i \in M, i \in \{0,...,S\})$ and edges $(F_i, i \in \{1,...,S\})$. Let $T_{\textsc{nn}} = ((Z_i,E_i))_{i \in \mathbb N}$ be an \textsc{nnt}. The connection process of an \textsc{nnt} onto $\Lambda$ is the sequence of vertex and edge random variable pairs
\begin{align*}
	T_\Lambda = ((X_i,E_i'))_{i \in \mathbb N},
\end{align*}
where
\begin{align*}
    X_i &:= 
    \begin{cases}
      V_i &: \textnormal{if}\ 0 \leq i \leq S, \\
      Z_{i-M} &: \textnormal{otherwise} ,
    \end{cases} \\
    \gamma_i &= \textnormal{argmin}_{0 \leq j \leq i-1} d(X_i,X_j), \\
\end{align*}
and
\begin{align*}
    E_i' = 
    \begin{cases}
      \emptyset &: \textnormal{if}\ i = 0 \\
      F_i &: \textnormal{if}\ 1 \leq i \leq S, \\
      (X_i,X_{\gamma_i}) &: \textnormal{otherwise}.
    \end{cases}
\end{align*}
\end{dfn}

It is worth noting that the process loses a lot of meaning if $\mathbb P \left[S = \infty\right] \not= 0$. As well, $\Lambda$ may be deterministic in which case its associated random variables are all constant.

The Connecting Lemma below links the \textsc{nnt} to the \textsc{rrt}.

\begin{lem}
Let $T_\Lambda = ((X_i,E_i'))_{i \in \mathbb N}$ be the process defined above. $T_\Lambda$ has the following properties:
\begin{itemize}
    \item Let $\delta_n$ be the distance from $Z_n$ to its parent in $T_\textsc{nn}$ and $\delta_n'$ be the distance from $X_{n + S}$ to its parent in $T_\Lambda$. Then $\delta_n \geq \delta_n'$.
    \item Let $H(\Lambda)$ be the height of $\Lambda$, $D_n$ be the depth of $Z_n$ in $T_\textsc{nn}$ and $D_{m}'$ be the depth of $X_m$ in $T_\Lambda$. Then for $m = S + n$,
    \begin{align*}
        D_m' \leq D_n + H(\Lambda) + 1.
    \end{align*}
\end{itemize}
\end{lem}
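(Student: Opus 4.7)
The plan is to leverage one structural observation: among the candidate parents for the $n$-th NNT vertex $Z_n$, the process $T_\Lambda$ draws from a strictly richer pool (the vertices $V_0, \ldots, V_S$ of $\Lambda$ together with all previous $Z$'s) than the pure NNT $T_\textsc{nn}$ (which only sees the previous $Z$'s). Both conclusions of the lemma fall out of this observation combined with a chain-tracing argument.

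The distance assertion is immediate. The parent of $X_{S+n} = Z_n$ in $T_\Lambda$ is the nearest element of $\{V_0, \ldots, V_S\} \cup \{Z_j : j < n\}$, while the parent of $Z_n$ in $T_\textsc{nn}$ is the nearest element of $\{Z_j : j \leq n-1\}$. Since the former set contains the latter (identifying the root $Z_0$ with its image in $\Lambda$), adding more candidates can only shrink the minimum, giving $\delta_n' \leq \delta_n$.

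The depth bound is the more delicate statement. I would follow the ancestor chain of $X_m$ in $T_\Lambda$, say $u_0 = X_m,\ u_1,\ \ldots,\ u_{D_m'} = V_0$, and split it at the first index $\tau$ at which a $V$-vertex appears. Because the $T_\Lambda$-edge incident to any $V_i$ is by definition the $\Lambda$-edge $F_i$, once the chain crosses into $\Lambda$ it remains in $\Lambda$ all the way to the root $V_0$; hence the suffix $u_\tau, \ldots, u_{D_m'}$ is a root-terminating path in $\Lambda$ and has length at most $H(\Lambda)$. For the $Z$-prefix $u_0, \ldots, u_{\tau-1}$, the key auxiliary claim to establish is this: if a $Z$-vertex's parent in $T_\Lambda$ happens to be another $Z$-vertex, it must simultaneously be the nearest $Z$-vertex among the earlier candidates, and therefore it coincides with its parent in $T_\textsc{nn}$. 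Iterating identifies the $Z$-prefix of the $T_\Lambda$-chain with an initial segment of the $T_\textsc{nn}$-chain of $Z_n$; that chain has only $D_n$ edges in total. Adding one for the jump edge $u_{\tau-1} \to u_\tau$ and at most $H(\Lambda)$ for the $\Lambda$-suffix produces the desired inequality $D_m' \leq D_n + H(\Lambda) + 1$.

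The main obstacle I expect is the auxiliary claim underlying the prefix: it is really just a nested-minima remark (the argmin over a superset, if attained inside the subset, is also the argmin over the subset), but it has to be applied carefully, and the two degenerate cases $\tau = 1$ (the chain enters $\Lambda$ immediately) and $\tau = D_m'$ (the chain enters $\Lambda$ only at the root) must be checked to see that they also satisfy the bound. Modulo these edge cases, the argument is a short induction along the chain.
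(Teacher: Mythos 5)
Your argument is correct and follows the same route as the paper: the first claim is the superset-of-candidates observation, and for the second the paper likewise splits the ancestor chain of $X_m$ at the first vertex belonging to $\Lambda$, bounding the suffix by $H(\Lambda)+1$ and asserting that the $Z$-prefix has length at most $D_n$. Your nested-minima claim is precisely the justification for that last assertion, which the paper states without proof ("certainly less than $D_n$"), so your write-up is if anything slightly more complete.
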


\begin{proof}
The first point follows by definition. For the second, let $m = n + M$ and let
\begin{align*}
    \alpha = \textnormal{argmin}_{k < i \leq m} \{X_i : X_i \textnormal{ is an ancestor of } X_m\}.
\end{align*}
The length of the path from $X_m$ to $X_\alpha$ is certainly less than $D_n$ and the path up the tree starting at $X_\alpha$'s parent is contained in $\Lambda$, so it follows that
\begin{align*}
    D_m' &\leq D_n + D_\alpha' \leq D_n + H(\Lambda) + 1.
\end{align*}
\end{proof}

Put simply, the first property of the Connecting Lemma states that the distance from a node to its parent is smaller in a connection process than an \textsc{nnt}. The second property states that the depth of a node in a connection process is less than the depth of the node in the \textsc{nnt} and an $O(H(\Lambda))$ term---importantly, in the case of the \textsc{rrt} $H(\Lambda) = O(1)$. Armed with this lemma, we can now analyze the behaviour of the \textsc{rrt} after covering.

\section{Phase 3---Growth After Covering}

\begin{figure}
\centering
\includegraphics[width=0.7\textwidth]{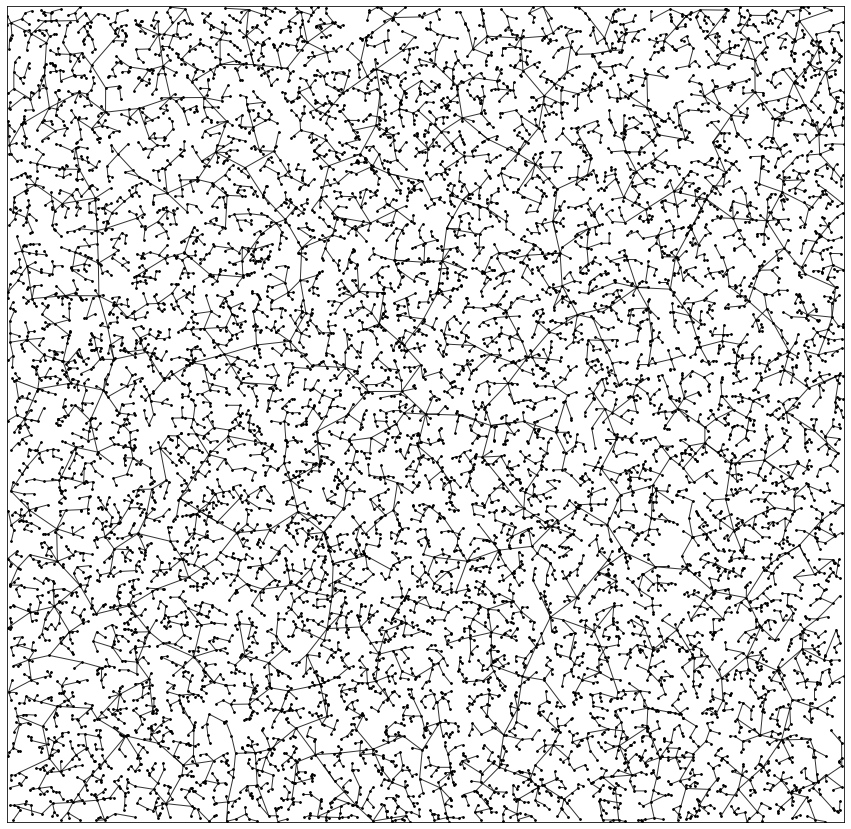}
\caption{An \textsc{rrt} well after covering time. The tree is growing denser in the space and the whole space is covered.}
\end{figure}

We first analyze the behaviour of the \textsc{nnt} and then relate it to the \textsc{rrt} via the connecting lemma. In the case of an \textsc{rrt}, $\Lambda = T_\epsilon^{\tau(T_\epsilon)}$ and $T_\textsc{nn} = T_\epsilon - \Lambda$.

\subsection{Euclidean Distances}

Let $\textnormal{Exp}(\lambda)$ be a random variable $X$ with distribution
\begin{align*}
    \mathbb P \left[X \leq x\right] = 1 - e^{-\lambda x}.
\end{align*}
We begin with a Lemma on the \textsc{nnt}.

\begin{lem}
Let $T_\textsc{nn}([0,1]^2) = ((X_i,E_i))_{i \in \mathbb N}$ be an \textsc{nnt}. Then $\delta_n$ obeys the following inequality
\begin{align*}
    \limsup_{n \to \infty} \mathbb E \left[\delta_n \sqrt{\pi n}\right] &\leq \mathbb E \left[\sqrt {\textnormal{Exp}(1)}\right] = \frac{\sqrt\pi}2.
\end{align*}
\end{lem}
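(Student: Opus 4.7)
The plan is to exploit the fact that in an \textsc{nnt} the vertex sequence $(X_i)_{i \ge 0}$ is i.i.d.\ uniform on $[0,1]^2$; only the edge structure depends on the tree. Conditioning on $X_n = x$, the previous points $X_0,\ldots,X_{n-1}$ are i.i.d.\ uniform and independent of $x$, giving the explicit tail
\begin{align*}
    \mathbb{P}\!\left[\delta_n > r \mid X_n = x\right] = \bigl(1 - |B(x,r) \cap [0,1]^2|\bigr)^n.
\end{align*}
Writing $\mathbb{E}[\delta_n\sqrt{\pi n}\mid X_n=x] = \sqrt{\pi n}\int_0^{\sqrt 2}\mathbb{P}[\delta_n>r\mid X_n=x]\,dr$ and applying $(1-y)^n\le e^{-ny}$ reduces the problem to Gaussian integrals; the target $\sqrt\pi/2$ arises from the identity $\sqrt{\pi n}\int_0^\infty e^{-\pi n r^2}\,dr = \sqrt\pi/2$, which is exactly what an interior point with no boundary truncation would contribute.

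I would then split $[0,1]^2$ into an interior region $A_n := \{x : d(x,\partial [0,1]^2)\ge r_n\}$ with $r_n := n^{-1/4}$, and its complement. For $x\in A_n$ and $r\le r_n$ the ball is not truncated and $|B(x,r)\cap [0,1]^2| = \pi r^2$; the substitution $u = r\sqrt{\pi n}$ yields
\begin{align*}
    \sqrt{\pi n}\int_0^{r_n}(1-\pi r^2)^n\,dr \;\le\; \int_0^{r_n\sqrt{\pi n}} e^{-u^2}\,du \;\xrightarrow[n\to\infty]{}\; \frac{\sqrt\pi}{2}.
\end{align*}
For $r > r_n$ and $x\in A_n$ the ball still contains the undisturbed disc of radius $r_n$, so $|B\cap [0,1]^2|\ge \pi r_n^2$, and the resulting tail contribution $\sqrt{\pi n}\cdot\sqrt 2\cdot e^{-\pi\sqrt n}$ vanishes super-polynomially. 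Hence $\mathbb{E}[\delta_n\sqrt{\pi n}\mid X_n = x]\le \sqrt\pi/2 + o(1)$ uniformly on $A_n$. For $x\in A_n^c$ I would use the worst-case corner bound $|B(x,r)\cap[0,1]^2|\ge \pi r^2/4$ (every point of $[0,1]^2$ has interior angle at least $\pi/2$), which after the analogous substitution yields the uniform a priori estimate $\mathbb{E}[\delta_n\sqrt{\pi n}\mid X_n = x]\le \sqrt\pi$. Since $|A_n^c|\le 4r_n = O(n^{-1/4})$, the boundary strip contributes $o(1)$, and summing gives $\mathbb{E}[\delta_n\sqrt{\pi n}]\le \sqrt\pi/2 + o(1)$.

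The main obstacle is the boundary. Heuristically, for interior $x$ the Poisson approximation $\mathrm{Binomial}(n,\pi r^2)\approx \mathrm{Poisson}(\pi n r^2)$ yields $\delta_n\sqrt{\pi n}\xrightarrow{d} \sqrt{\textnormal{Exp}(1)}$, but this breaks down near $\partial [0,1]^2$ where the effective ball area is smaller and $\delta_n$ is inflated. The choice $r_n = n^{-1/4}$ is a soft threshold that simultaneously enforces $r_n\sqrt n\to\infty$ (so the interior integral captures the full Gaussian tail) and $r_n\to 0$ (so the boundary strip has vanishing measure); any rate satisfying these two conditions would work equally well, and the only real work is bookkeeping the two estimates together.
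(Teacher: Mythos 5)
Your proposal is correct in substance and, in fact, more careful than the argument the paper gives. The paper's proof uses the same core idea---bound $\mathbb P[\delta_n > x]$ by the probability that none of the earlier points falls in $B(X_n,x)$, integrate the tail, and recognize the Gaussian integral $\sqrt\pi/2$---but it asserts $\mathbb P[\delta_n > x] \leq (1-\pi x^2)^{n-1}$ outright, ignoring that $|B(X_n,x)\cap[0,1]^2|$ can be as small as $\pi x^2/4$ near a corner, so that the stated inequality actually points the wrong way for boundary points; it also passes from convergence of the tail bounds to convergence of the expectations without justifying the interchange. Your decomposition into an interior region $A_n$ (where the disc is untruncated and the substitution $u=r\sqrt{\pi n}$ delivers exactly $\int_0^\infty e^{-u^2}\,du = \sqrt\pi/2$) and a boundary strip of measure $O(n^{-1/4})$ (where a uniform $O(1)$ bound suffices) repairs both defects, and your explicit tail-integral estimates make the limit interchange unnecessary. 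Two small points to tidy up: since $X_0=v$ is deterministic, the conditional tail is $\bigl(1-|B(x,r)\cap[0,1]^2|\bigr)^{n-1}$ (times an indicator for $X_0$), not an exact $n$-th power---harmless asymptotically but worth stating as an inequality; and the quarter-disc bound $|B(x,r)\cap[0,1]^2|\geq \pi r^2/4$ only holds for $r\leq 1$ (a quarter disc of radius $\sqrt 2$ has area exceeding $1$), so the boundary-strip integral should be cut at $r=1$ with the range $r\in[1,\sqrt2]$ absorbed into the exponentially small tail. With those trivial patches your argument is complete.
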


\begin{proof}
If $\delta_n > x$, then $X_1,...,X_{n-1}$ all lie outside the ball $B(X_n,x) \cap [0,1]^2$. It follows by independence that
\begin{align*}
    \mathbb P \left[\delta_n > x\right] \leq \left(1 - \pi x^2\right)^{n-1}.
\end{align*}
Considering $\delta_n \sqrt{\pi n}$ then gives
\begin{align*}
    \mathbb P \left[\delta_n \sqrt{\pi n} > x\right] \leq \left(1 - \frac{x^2}n\right)^{n-1}.
\end{align*}
Taking the limit, we get
\begin{align*}
    \mathbb P \left[\delta_n \sqrt{\pi n} \leq x\right] &\leq 1 - \left(1 - \frac{x^2}n\right)^{n-1} \ensuremath{\stackrel{n \to \infty}{\longrightarrow}} 1 - e^{-x^2}.
\end{align*}
This is the distribution of $\sqrt{\textnormal{Exp}(1)}$, so
\begin{align*}
    \limsup_{n \to \infty} \mathbb E \left[\delta_n \sqrt{\pi n}\right] &\leq \mathbb E \left[\sqrt{\textnormal{Exp}(1)}\right] = \frac{\sqrt\pi}2 .
\end{align*}
This gives our result.
\end{proof}

Now we use this lemma for a result on the \textsc{rrt}.

\begin{thm}
Let $T_\epsilon([0,1]^2) = ((X_i,E_i))_{i \in \mathbb N}$ be an \textsc{rrt}. Then $\delta_m'$ obeys the following inequality
\begin{align*}
    \limsup_{m \to \infty} \mathbb E \left[\delta_m' \sqrt{\pi m}\right] \leq \frac{\sqrt\pi}2.
\end{align*}
\end{thm}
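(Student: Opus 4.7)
The plan is to view the \textsc{rrt} after the covering time $S := \tau(T_\epsilon)$ as a connection process and then push the preceding \textsc{nnt} lemma through the Connecting Lemma. First I would set up the coupling: for every $i > S$ the randomly drawn $Y_i$ lies within $\epsilon$ of the current tree, so $X_i = Y_i$, and the RRT suffix is exactly the connection process $T_\Lambda$ of Section 6 with $\Lambda = T_\epsilon^S$ driven by the sequence $Z_n := Y_{S+n}$. Because $S$ is a stopping time for the i.i.d.\ sequence $(Y_i)$, the strong Markov property guarantees that $(Z_n)_{n \geq 1}$ is i.i.d.\ uniform on $[0,1]^2$, so the $(Z_n)$ drive a bona fide \textsc{nnt} $T_{\textsc{nn}}$.

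Next I would invoke the first bullet of the Connecting Lemma to dominate the \textsc{rrt} parent-distance by the \textsc{nnt} one: for every $m > S$ one has $\delta_m' \leq \delta_{m-S}$, where $\delta_n$ denotes the parent-distance in $T_{\textsc{nn}}$. Hence, with the convention $\delta_m' \leq \epsilon$ when $m \leq S$ (recalling that every \textsc{rrt} edge has length at most $\epsilon$ by construction), we can split
\begin{align*}
    \mathbb{E}\bigl[\delta_m'\sqrt{\pi m}\bigr]
      = \mathbb{E}\bigl[\delta_m'\sqrt{\pi m}\,\mathbf{1}_{S \geq m}\bigr]
      + \mathbb{E}\bigl[\delta_m'\sqrt{\pi m}\,\mathbf{1}_{S < m}\bigr].
\end{align*}
The first term is bounded by $\epsilon\sqrt{\pi m}\,\mathbb{P}(S \geq m) \leq \epsilon\sqrt{\pi m}\,\mathbb{E}[S]/m$, and since $\mathbb{E}[S] < \infty$ by Theorem~2 this vanishes as $m \to \infty$.

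For the main term I would condition on $S$: by the coupling, conditionally on $S = s$ the variable $\delta_{m-s}$ has exactly the \textsc{nnt} distribution, so
\begin{align*}
    \mathbb{E}\bigl[\delta_m'\sqrt{\pi m}\,\bigl|\,S = s\bigr]
      \leq \sqrt{\tfrac{m}{m-s}}\,\mathbb{E}\bigl[\delta_{m-s}\sqrt{\pi(m-s)}\bigr].
\end{align*}
Writing $f(n) := \mathbb{E}[\delta_n\sqrt{\pi n}]$, the previous lemma (and the explicit tail bound $\mathbb{P}(\delta_n\sqrt{\pi n} > x) \leq (1-x^2/n)^{n-1}$ from its proof) furnishes a uniform bound $f(n) \leq C$ and $\limsup f(n) \leq \sqrt{\pi}/2$. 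Splitting once more on $\{S \leq m/2\}$ and $\{m/2 < S < m\}$: on the first event $\sqrt{m/(m-S)} \leq \sqrt{2}$ so dominated convergence (using $S < \infty$ a.s.\ and $f(m-S) \to \sqrt{\pi}/2$) gives $\limsup \leq \sqrt{\pi}/2$; on the second, $\sqrt{m/(m-S)} \leq \sqrt{m}$ is bounded while $\mathbb{P}(S > m/2) \leq 2\mathbb{E}[S]/m$, so the contribution is $O(1/\sqrt{m})$.

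The main obstacle is the interaction between the random covering time $S$ and the scaling factor $\sqrt{\pi m}$: the ratio $\sqrt{m/(m-S)}$ can be arbitrarily large on $\{S$ close to $m\}$, so one has to use the tail estimate $\mathbb{P}(S \geq m) = O(1/m)$ supplied by Theorem~2 to cap these rare events. Everything else — the coupling, the Connecting Lemma dominance, and the dominated-convergence step on $\{S \leq m/2\}$ — is essentially mechanical once this tail control is in place.
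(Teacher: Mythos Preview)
Your argument is correct and follows the same skeleton as the paper's proof: both invoke the Connecting Lemma to dominate $\delta_m'$ by the \textsc{nnt} parent distance $\delta_{m-S}$ and then control the random shift $S = \tau(T_\epsilon)$. The tactical execution differs. The paper splits at the threshold $\tau(T_\epsilon) = \sqrt{m}$ and, to kill the event $\{\tau(T_\epsilon) \geq \sqrt m\}$, appeals to an explicit exponential tail bound $\mathbb P[\tau(T_\epsilon) \geq \sqrt m] \leq \binom{\sqrt m}{\alpha}\alpha^{-\sqrt m}\alpha^{\alpha+1}$; it also uses Cauchy--Schwarz in two places to peel off lower-order pieces, ultimately reducing everything to the single deterministic sequence $\mathbb E\bigl[\delta_{m-\sqrt m}\sqrt{\pi(m-\sqrt m)}\bigr]$. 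Your split at $S = m/2$ together with plain Markov $\mathbb P(S > m/2) \leq 2\mathbb E[S]/m$ (needing only $\mathbb E[S] < \infty$ from Theorem~2) and a reverse-Fatou step on the dominant event $\{S \leq m/2\}$ is more economical and sidesteps both the ad hoc tail estimate and the Cauchy--Schwarz manipulations. The trade-off is that the paper's route gives a more explicit deterministic comparison sequence, whereas yours passes through a dominated-convergence argument that delivers only the $\limsup$ statement---which is all that is claimed.
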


\begin{proof}
Consider $\tau(T_\epsilon)$, the covering time of $T_\epsilon$. Recall that $\tau(T_\epsilon)$ is finite with probability 1. Define $N_m := (m - \tau(T_\epsilon)) \mathbf 1_{\left[m \geq \tau(T_\epsilon)\right]}.$ Then we have
\begin{align*}
    m = \mathbf 1_{\left[m \geq \tau(T_\epsilon)\right]} \left(\tau(T_\epsilon) + N_m\right) + m \mathbf 1_{\left[m < \tau(T_\epsilon)\right]}.
\end{align*}
Now, by the Connecting Lemma,
\begin{align*}
    \mathbb E \left[\delta_m' \sqrt{\pi m}\right] &\leq \mathbb E \left[\delta_{N_m + \tau(T_\epsilon)}' \mathbf 1_{\left[m \geq \tau(T_\epsilon)\right]} \sqrt{\pi (N_m + \tau(T_\epsilon))}\right] \\
    &\quad + \mathbb E \left[\delta_m' \mathbf 1_{\left[m < \tau(T_\epsilon)\right]} \sqrt{\pi m}\right].
\end{align*}
For the second term, by Cauchy-Schwarz,
\begin{align*}
	&\mathbb E \left[\delta_m' \mathbf 1_{\left[m < \tau(T_\epsilon)\right]} \sqrt{\pi m}\right]
	\leq \mathbb P \left[m < \tau(T_\epsilon)\right]^\frac12 \mathbb E \left[(\delta_m')^2 \pi m\right]^{\frac12},
\end{align*}
and since
\begin{align*}
	\limsup_{m \to \infty} \mathbb E\left[(\delta_m')^2 \pi m\right] \leq \mathbb E \left[\textnormal{Exp}(1)\right] = 1,
\end{align*}
it follows that
\begin{align*}
	&\limsup_{m \to \infty} \mathbb E \left[\delta_m' \mathbf 1_{\left[m < \tau(T_\epsilon)\right]} \sqrt{\pi m}\right] \leq \limsup_{m \to \infty} \mathbb P \left[m < \tau(T_\epsilon)\right]^\frac12 = 0.
\end{align*}

Next we split the first term. By the Connecting Lemma and Cauchy-Schwarz
\begin{align*}
    &\mathbb E \left[\delta_{N_m + \tau(T_\epsilon)}' \mathbf 1_{\left[m \geq \tau(T_\epsilon)\right]} \sqrt{\pi (N_m + \tau(T_\epsilon)}\right] \\
     \leq\ &\mathbb E \left[\delta_{N_m} \mathbf 1_{\left[m \geq \tau(T_\epsilon)\right]} \sqrt{\pi N_m}\right] + \mathbb E \left[\delta_m^2\right]^\frac12 \mathbb E \left[\tau(T_\epsilon)^2\right]^\frac12.
\end{align*}
In the second term above, recall that $\tau(T_\epsilon)$ can be upper bounded by a finite sum of independent random variables. Clearly $\mathbb E \left[\delta_m^2\right] = o(1)$, so it follows that
\begin{align*}
    \mathbb E \left[\delta_m' \sqrt{\pi m}\right] \leq \mathbb E \left[\delta_{N_m} \mathbf 1_{\left[m \geq \tau(T_\epsilon)\right]} \sqrt{\pi N_m}\right] + o(1).
\end{align*}
Now we split the first term again, this time for values of $N_m$ around $m - \sqrt m$. For $m > 4$
\begin{align*}
	&\delta_{N_m} \mathbf 1_{\left[m \geq \tau(T_\epsilon)\right]} \sqrt{\pi N_m} \\
	\leq\ &\mathbf 1_{\left[N_m \leq \left(m -\sqrt m\right)\right]} \left(m - \sqrt m\right) + \mathbf 1_{\left[N_m \geq \left(m - \sqrt m\right)\right]} \delta_{N_m} \sqrt{\pi m} \\
	=\ &\mathbf 1_{\left[\tau(T_\epsilon) \geq \sqrt m\right]} \left(m - \sqrt m\right) + \mathbf 1_{\left[N_m \geq \left(m - \sqrt m\right)\right]} \delta_{N_m} \sqrt{\pi m}.
\end{align*}
Let $\alpha = \left\lfloor \frac1{1 - \pi\epsilon^2} \right\rfloor$. At any time before covering the probability of failure is bounded above by $\frac1\alpha$ while success we can bound trivially by 1. As well, in order to have $\tau(T_\epsilon) \geq \sqrt m$ we must have at least $\sqrt m - \alpha-1$ failures in the first $\sqrt m$ steps, so we have the following bound on the first term above:
\begin{align*}
	\mathbb E \left[\mathbf 1_{\left[\tau(T_\epsilon) \geq \sqrt m\right]} \left(m - \sqrt m \right)\right] &\leq \left(m - \sqrt m\right) \mathbb P \left[\tau(T_\epsilon) \geq \sqrt m\right] \\
	&\leq \left(m - \sqrt m\right) \binom{\sqrt m}{\alpha} \alpha^{-\sqrt m} \alpha^{\alpha + 1} \\
	& = o(1).
\end{align*}
Finally, we find a bound on the most significant term. Note that for $\ell > 0$
\begin{align*}
	\mathbb E \left[\mathbf 1_{\left[N_m \geq \left(m - \sqrt m\right)\right]} \delta_{N_m} \sqrt{\pi m}\right] \leq \sqrt{\frac m{m - \sqrt m}} \mathbb E \left[\delta_{m - \sqrt m}\sqrt{\pi \left(m - \sqrt m\right)}\right].
\end{align*}
Combining the above inequalities gives gives us that
\begin{align*}
	\mathbb E \left[\delta_m' \sqrt{\pi m}\right] \leq \sqrt{\frac m{m - \sqrt m}} \mathbb E \left[\delta_{m - \sqrt m}\sqrt{\pi \left(m - \sqrt m\right)}\right] + o(1).
\end{align*}
Taking the limit we get our desired result
\begin{align*}
    \limsup_{m \to \infty} \mathbb E \left[\delta_m' \sqrt{\pi m}\right] \leq \limsup_{m \to \infty} \sqrt{\frac {m + \sqrt m}m} \mathbb E \left[\delta_{m}\sqrt{\pi m}\right] \leq \frac{\sqrt\pi}2.
\end{align*}
\end{proof}

This quickly gives a result on the tree that is more tangible.

\begin{cor}
Let $T_\epsilon([0,1]^2) = ((X_i,E_i))_{i \in \mathbb N}$ be an \textsc{rrt} and let $\Delta_n = \sum_{i=1}^n \delta_i'$, the total Euclidean path length at time $n$. Then
\begin{align*}
    \mathbb E \left[\Delta_n\right] = O\left(\sqrt {n}\right).
\end{align*}
\end{cor}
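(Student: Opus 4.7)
The plan is to apply linearity of expectation and reduce the claim to a standard sum estimate. By the previous theorem, $\limsup_{m\to\infty} \mathbb E[\delta_m' \sqrt{\pi m}] \leq \frac{\sqrt{\pi}}{2}$, which means that for every $\varepsilon > 0$ there exists a threshold $M_\varepsilon$ such that $\mathbb E[\delta_m'] \leq \frac{1/2 + \varepsilon}{\sqrt{m}}$ for all $m \geq M_\varepsilon$. For the finitely many small indices $m < M_\varepsilon$, I would simply use the fact that $M = [0,1]^2$ has diameter $\sqrt{2}$, so trivially $\delta_m' \leq \sqrt{2}$ and hence $\mathbb E[\delta_m'] \leq \sqrt{2}$.

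Next, by linearity of expectation,
\begin{align*}
    \mathbb E[\Delta_n] = \sum_{i=1}^{n} \mathbb E[\delta_i'] \leq \sqrt{2}\, M_\varepsilon + \sum_{i=M_\varepsilon}^{n} \frac{1/2 + \varepsilon}{\sqrt{i}}.
\end{align*}
The remaining estimate $\sum_{i=1}^{n} \frac{1}{\sqrt i} = 2\sqrt{n} + O(1)$ is classical (compare to $\int_1^n x^{-1/2}\,dx$), and plugging it in immediately yields $\mathbb E[\Delta_n] = O(\sqrt{n})$, giving the corollary.

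There is essentially no obstacle here: the theorem has already done the heavy lifting by providing the per-step rate $\mathbb E[\delta_m'] = O(1/\sqrt m)$, which is exactly the rate needed to make the partial sums grow like $\sqrt n$. The only small care is in handling the initial indices where the asymptotic bound is not yet in force, but the compactness of $[0,1]^2$ makes this a triviality. No further probabilistic argument or appeal to the Connecting Lemma is required beyond what is already encapsulated in the preceding theorem.
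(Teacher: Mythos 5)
Your proposal is correct and follows essentially the same route as the paper: invoke the preceding theorem to get $\mathbb E[\delta_m'] = O(1/\sqrt m)$, apply linearity of expectation, and compare the resulting sum to $\int_1^n x^{-1/2}\,dx$. Your explicit handling of the finitely many initial indices via the diameter bound is a minor extra precaution that the paper's $O(\cdot)$ notation absorbs implicitly.
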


\begin{proof}
By Theorem 4, we know that
\begin{align*}
    \mathbb E \left[\delta_n'\right] = O\left(\frac1{\sqrt{n}}\right).
\end{align*}

It follows that
\begin{align*}
    \mathbb E \left[\Delta_n\right] &= \sum_{i=1}^n \mathbb E \left[\delta_i'\right] = O\left(\sum_{i=1}^n \left(\frac1{\sqrt{i}}\right)\right) = O\left(\sqrt n\right).
\end{align*}
\end{proof}

Our final result on Euclidean distances concerns the path from a single node to the root.

\begin{thm}
Let $T_\epsilon([0,1]^2) = ((X_i,E_i))_{i \in \mathbb N}$ be an \textsc{rrt} and let $L_n$ be the Euclidean path length from $X_n$ to $X_0$. Then
\begin{align*}
    \mathbb E \left[L_n\right] = O(1).
\end{align*}
\end{thm}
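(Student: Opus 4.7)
The plan is to decompose the path $X_n \to X_0$ at the covering time $\tau(T_\epsilon)$ and then reduce to bounding the expected root-to-node path length in a pure \textsc{nnt}. Writing $L_n = L_n^\Lambda + L_n^{\textsc{nn}}$, where $L_n^\Lambda$ is the total length of edges of the ancestor path lying inside $\Lambda = T_\epsilon^{\tau(T_\epsilon)}$ and $L_n^{\textsc{nn}}$ is the length of the remaining edges, I would first observe that every edge of $\Lambda$ has length at most $\epsilon$, so $L_n^\Lambda \leq \epsilon\, H(\Lambda)$; because $\Lambda$ depends only on $\epsilon$ (not on $n$), the comment following the Connecting Lemma gives $\mathbb{E}[H(\Lambda)] = O(1)$ in $n$, so $\mathbb{E}[L_n^\Lambda] = O(1)$. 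By the first property of the Connecting Lemma, each edge length $\delta_k'$ for $k > \tau(T_\epsilon)$ is dominated by the corresponding pure-\textsc{nnt} edge length, so $L_n^{\textsc{nn}}$ is in turn bounded by the root-to-vertex Euclidean path length in an ordinary \textsc{nnt} of size $n - \tau(T_\epsilon)$.

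It then suffices to prove $\mathbb{E}[L^{\textsc{nn}}_n] = O(1)$ for an \textsc{nnt} on $[0,1]^2$. The plan is to write
\begin{align*}
\mathbb{E}[L^{\textsc{nn}}_n] = \sum_{k=1}^{n} \mathbb{E}\!\left[\delta_k \mathbf{1}_{k \in A_n}\right],
\end{align*}
where $A_n$ is the set of indices of ancestors of $X_n$, and to bound each summand as $O(k^{-3/2})$. Two ingredients drive the bound: the estimate $\mathbb{E}[\delta_k] = O(k^{-1/2})$ from the lemma preceding Theorem 4, and the identity $\mathbb{P}[\gamma_j = k] = 1/j$ (valid by exchangeability of the i.i.d.\ uniform vertices for every $k < j$), which by the same recursion that governs the random recursive tree suggests $\mathbb{P}[k \in A_n] = O(1/k)$. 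If the product form $\mathbb{E}[\delta_k \mathbf{1}_{k \in A_n}] \leq \mathbb{E}[\delta_k]\,\mathbb{P}[k \in A_n]$ can be justified, then $\sum_k k^{-3/2}$ converges to an absolute constant and the bound follows.

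The main obstacle lies in decoupling $\delta_k$ from $\mathbf{1}_{k \in A_n}$, since the \textsc{nnt} is not a random recursive tree: the geometry that makes $\delta_k$ small also tends to make $X_k$ sit in a denser region, so the two quantities are not independent. I expect the correct route is a negative-correlation argument---informally, a vertex $X_k$ whose nearest earlier neighbour is far away is geometrically isolated and therefore less likely to collect descendants---which would yield the product bound directly. Should this prove stubborn, a geometric fallback is to group ancestor indices into dyadic windows $\bigl[n/2^{j+1}, n/2^{j}\bigr]$, show that the expected number of ancestors of $X_n$ falling in each window is $O(1)$ and that their typical edge length is $O(2^{j/2}/\sqrt{n})$, and then sum the resulting geometric series in $j$ up to $j = O(\log n)$ to obtain $O(1)$.
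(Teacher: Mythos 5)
Your skeleton matches the paper's: split off the pre-covering portion of the ancestor path (the paper bounds it crudely by $\epsilon\,\tau(T_\epsilon)$, you by $\epsilon\,H(\Lambda)$ --- both are $O(1)$ in $n$), then control $\sum_i \mathbb E\left[\delta_i \mathbf 1_{[A_i]}\right]$ using $\mathbb P[A_i] = O(1/i)$ and $\delta_i = O(i^{-1/2})$. But the step you yourself flag as ``the main obstacle'' is the actual content of the theorem, and neither of your proposed routes closes it. Worse, your negative-correlation heuristic points the wrong way: a vertex $X_k$ with large $\delta_k$ landed in a region sparsely occupied by $X_0,\dots,X_{k-1}$, so it owns a large Voronoi cell, is more likely to acquire children, and hence is \emph{more} likely to be an ancestor of $X_n$. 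One should expect $\delta_k$ and $\mathbf 1_{[k \in A_n]}$ to be positively correlated, so the product bound $\mathbb E[\delta_k \mathbf 1_{[k\in A_n]}] \leq \mathbb E[\delta_k]\,\mathbb P[k \in A_n]$ is precisely what you may not assume. The dyadic fallback inherits the same problem: the ``typical edge length'' of the ancestors in a window is an expectation conditioned on ancestry, which is the quantity in dispute.

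The paper's resolution avoids decoupling a random $\delta_i$ from $A_i$ altogether. Cover $[0,1]^2$ by a grid of side $\frac{\log i}{\sqrt i}$ and let $E_i$ be the event that the cell containing $X_i$ holds none of $X_0,\dots,X_{i-1}$. On $E_i^c$ the bound $\delta_i \leq \sqrt2\,\frac{\log i}{\sqrt i}$ is \emph{deterministic}, so
\[
\mathbb E\left[\delta_i \mathbf 1_{[A_i]}\right] \leq \sqrt2\,\frac{\log i}{\sqrt i}\,\mathbb P[A_i] + \sqrt 2\,\mathbb P[E_i] \leq \sqrt 2\,\frac{\log i}{i^{3/2}} + \sqrt2\left(1 - \frac{(\log i)^2}{i}\right)^{i-1},
\]
and both resulting series converge, giving $O(1)$. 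Note also that the paper justifies $\mathbb P[A_i]=\frac1i$ by a records argument; your route via ``the same recursion as the random recursive tree'' needs care, since in an \textsc{nnt} the parent indices $\gamma_j$ are uniform only marginally and are not jointly independent. If you want to salvage your outline, replace the hoped-for independence with this device of a deterministic bound on $\delta_i$ valid off an event of superpolynomially small probability.
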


\begin{proof}
First, we observe a useful equality where $A_i$ is the event that $X_i$ is an ancestor of $X_n$ and $\mathbf 1_{[A]}$ is the indicator function:
\begin{align*}
    L_n \leq \sum_{i=1}^{n-1} \mathbf 1_{\left[A_i\right]} \delta_i + \delta_n + \epsilon \cdot \tau(T_\epsilon).
\end{align*}
Now cover $[0,1]^2$ with a grid of side length $\frac{\log i}{\sqrt i}$. Letting $S_i$ be the grid square containing $X_i$ and letting $E_i$ be the event that $\{X_0,...,X_{i-1}\} \cap S_i = \emptyset$, we get the inequality
\begin{align*}
    \delta_i \leq \mathbf 1_{\left[E_i\right]} \frac{\sqrt{2}\log i}{\sqrt i} + \sqrt2 \mathbf 1_{\left[E_i^c\right]}.
\end{align*}
Next, sort the nodes $\{X_0,...,X_{n-1}\}$ by distance to $X_n$. Then by this sorting $A_i$ is the event that $X_i$ is a record, so $\mathbb P \left[A_i\right] = \frac1i$ \cite{GlickRecords}. Furthermore, $A_i$ is independent of the event $E_i$. It follows that
\begin{align*}
    \mathbb E \left[L_n\right] &\leq \mathbb E \left[\sum_{i=1}^{n-1} \mathbf 1_{\left[A_i\right]} \cdot \delta_i + \delta_n\right] + \mathbb E \left[\tau(T_\epsilon)\right] \\
    &\leq \sqrt2 + \mathbb E \left[\tau(T_\epsilon)\right] + \sum_{i=1}^{n-1} \mathbb E \left[\frac{\sqrt{2}\log i}{\sqrt i} \cdot \mathbf 1_{\left[A_i\right]} \mathbf 1_{\left[E_i\right]} + \sqrt 2 \cdot \mathbf 1_{\left[A_i\right]} \mathbf 1_{\left[E_i^c\right]}\right] \\
    &\leq \sqrt2 + \mathbb E \left[\tau(T_\epsilon)\right] +  \sqrt2 \sum_{i=1}^{n-1} \frac{\log i}{i^\frac32} + \sum_{i=1}^{n-1} \left(1 - \frac{(\log i)^2}i\right)^{i-1} \\
    &= O(1).
\end{align*}
\end{proof}

\subsection{Height and Depth}
Next, we consider the height of the tree and depth of the $n$-th node. We note the similarity with the analysis of the height of a binary search tree \cite{devroye1986note}.

To begin, we analyze the height of the \textsc{nnt}.

\begin{lem}
Let $T_\textsc{nn}([0,1]^2) = ((X_i,E_i))_{i \in \mathbb N}$ be an \textsc{nnt}, let $D_n$ be the depth of $X_n$, and let $H_n$ be the height of $T_\textsc{nn}^n$. For every $\epsilon > 0$, we have
\begin{align*}
    &\lim_{n \to \infty} \mathbb P \left[D_n > (1 + \epsilon) \log n\right] = 0, \\
    &\lim_{n \to \infty} \mathbb P \left[H_n > (1 + \epsilon) e \log n\right] = 0.
\end{align*}
\end{lem}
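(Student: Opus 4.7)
The plan is to adapt Devroye's classical Chernoff-style analysis of depth and height in random recursive trees. I would first write
\begin{align*}
    D_n = \sum_{i=0}^{n-1} \mathbf{1}_{A_i}
\end{align*}
where $A_i$ is the event that $X_i$ is an ancestor of $X_n$ in $T_\textsc{nn}^n$. The records argument used in the proof of Theorem 5 gives the marginal probabilities $\mathbb P[A_i] = 1/i$, and the first real task is to lift this to a moment generating function bound of the product form
\begin{align*}
    \phi_n(t) := \mathbb E\bigl[e^{tD_n}\bigr] \leq \prod_{i=1}^{n-1}\left(1 + \frac{e^t-1}{i}\right) \leq n^{e^t-1}.
\end{align*}
Everything downstream is driven by this one inequality.

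From there the depth tail follows by a direct Chernoff computation: $\mathbb P[D_n > (1+\epsilon)\log n] \leq n^{-t(1+\epsilon) + e^t - 1}$, optimized at $t = \log(1+\epsilon)$, giving the polynomially small bound with exponent $\epsilon - (1+\epsilon)\log(1+\epsilon) < 0$ for $\epsilon > 0$. For the height I would combine a union bound with the same Chernoff estimate: $\mathbb P[H_n > (1+\epsilon)e\log n] \leq n\,\max_{k \leq n} \mathbb P[D_k > (1+\epsilon)e\log n]$, and rerun the optimization for the target $c\log n$ with $c = (1+\epsilon)e$ and optimal $t = \log c = 1 + \log(1+\epsilon)$. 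The Chernoff exponent works out to $-1 - (1+\epsilon)e\log(1+\epsilon)$, so the union bound's extra factor of $n$ is precisely absorbed by the ``$-1$,'' leaving $n^{-(1+\epsilon)e\log(1+\epsilon)} \to 0$. The constant $e$ is singled out as exactly the threshold above which the Chernoff exponent can beat the union bound's factor of $n$.

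The main obstacle is justifying the product-form bound on $\phi_n(t)$. The equality $\mathbb P[A_i] = 1/i$ is only a marginal statement, whereas factorizing the moment generating function requires honest independence of the ancestor indicators (or at least negative association). In random recursive trees this is automatic because each node's parent is an independent uniform choice, so $D_n \stackrel{d}{=} \sum_i B_i$ with $B_i \sim \mathrm{Bernoulli}(1/i)$ independent; in the \textsc{nnt} the parents are determined by Euclidean geometry and independence is more delicate. The import has to come through the records argument of Theorem 5: the ranks of $d(X_0,X_n),\ldots,d(X_{n-1},X_n)$ form a uniformly random permutation of $\{0,\ldots,n-1\}$, and the record indicators in a uniform random permutation are classically independent Bernoullis with parameters $1/i$, which is exactly what is needed to complete the product bound.
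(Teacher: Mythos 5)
Your Chernoff computation is correct and, after a change of variables, is literally the paper's: the paper represents the depth via iterated parent indices as $D_n = \inf \{k : \lfloor \lfloor nU_1\rfloor U_2 \cdots U_k\rfloor = 0\}$ with independent uniforms, applies Markov to $n^\lambda \prod_i U_i^\lambda$ to get $\inf_{\lambda>0} n^\lambda (1+\lambda)^{-x}$, and this is exactly your $\inf_{t>0} n^{e^t-1}e^{-tx}$ under the substitution $1+\lambda = e^t$. Your optimal $t=\log(1+\epsilon)$ and exponent $\epsilon-(1+\epsilon)\log(1+\epsilon)<0$ for the depth, and the union bound with $c=(1+\epsilon)e$ whose ``$-1$'' absorbs the factor $n$, reproduce the paper's $n^{\phi(\epsilon)}$ and $(1+\epsilon)^{-(1+\epsilon)e\log n}$ exactly (the paper's displayed $\phi(\epsilon)=\epsilon-(1-\epsilon)\log(1+\epsilon)$ has a sign typo; your version is the right one). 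So at the level of estimates the two proofs coincide; the only difference is whether one phrases the input as a sum of independent Bernoulli$(1/i)$ record indicators or as a product of independent uniforms, and these are the same classical representation of the depth of a random recursive tree.

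The one place where your write-up has a real gap is the justification of independence, and you correctly identified it as the crux. The identification ``$X_i$ is an ancestor of $X_n$ iff $X_i$ is a record of the distances to $X_n$'' is only valid for the parent: the parent of $X_n$ is the point nearest to $X_n$, but the grandparent is the point among $X_0,\dots,X_{\gamma_n-1}$ nearest to $X_{\gamma_n}$, not to $X_n$, so the higher ancestors are not determined by the ranks of $d(X_0,X_n),\dots,d(X_{n-1},X_n)$ and the record-indicator independence theorem does not apply to the sequence $(\mathbf 1_{A_i})$ as you define it. The paper avoids this by arguing one generation at a time: by exchangeability of $X_0,\dots,X_{j-1}$ given $X_j$, the parent index of $X_j$ is uniform on $\{0,\dots,j-1\}$, and iterating down the ancestor chain gives the nested product-of-uniforms (equivalently, the independent-Bernoulli-sum) law for $D_n$ directly. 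If you replace the records justification by this iterated uniform-parent argument, your moment generating function bound $\mathbb E[e^{tD_n}] \leq n^{e^t-1}$ is legitimate and the rest of your proof goes through unchanged.
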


\begin{proof}
Observe that the index of $X_n$'s parent is uniformly distributed on $\{0,...,n-1\}$ due to independence. Stated another way, it is distributed as $\left\lfloor n U \right \rfloor$ where $U$ is uniform on $[0,1].$ Iterating this process will take us to $X_n$'s parent, then grandparent, then eventually the root. It follows that
\begin{align*}
    D_n = \inf \left\{k \in \mathbb N : \left \lfloor \left \lfloor \left \lfloor n U_1 \right \rfloor U_2 \right \rfloor ...U_k \right \rfloor = 0 \right\}.
\end{align*}
where the $U_i$'s are independent. For a useful bound on the $D_n$, we upper bound by removing the floors and using Markov's inequality:
\begin{align*}
    \mathbb P \left[D_n > x\right] &= \mathbb P \left[ \left \lfloor \left \lfloor \left \lfloor n U_1 \right \rfloor U_2 \right \rfloor ...U_x \right \rfloor > 0 \right] \\
    &\leq \mathbb P \left[n \prod_{i=1}^x U_i \geq 1\right] \\
    &\leq \inf_{\lambda > 0} \mathbb E \left[n^\lambda \prod_{i=1}^x U_i^\lambda \right] \\
    &= \inf_{\lambda > 0} n^\lambda \frac1{(1+\lambda)^x} \\
    &= \inf_{\lambda > 0} e^{\lambda \log n} \frac1{(1+\lambda)^x}.
\end{align*}
When $x > \log n$, the infimum is achieved at
\begin{align*}
    1 + \lambda = \frac{x}{\log n}.
\end{align*}

It follows that
\begin{align*}
    \mathbb P \left[D_n > x\right] &\leq e^{x - \log n} \left(\frac{\log n}{x}\right)^x = \frac1n \left(\frac{e \log n}{x}\right)^x.
\end{align*}

If $x = (1+\epsilon)\log n$, then with
\begin{align*}
    \phi(\epsilon) = \epsilon - (1-\epsilon) \log (1+\epsilon),
\end{align*}
we have
\begin{align*}
    \mathbb P \left[D_n > (1+\epsilon)\log n\right] &\leq \frac1n \left(\frac{e}{(1+\epsilon)}\right)^{(1+\epsilon)\log n} = n^{\phi(\epsilon)}.
\end{align*}

Note that for $\epsilon > 0$, $\phi(\epsilon) < 0$. It follows that
\begin{align*}
    \mathbb P \left[D_n > (1+\epsilon)\log n\right] \ensuremath{\stackrel{n \to \infty}{\longrightarrow}} 0.
\end{align*}

Next, we can bound the height of the tree by
\begin{align*}
    \mathbb P \left[H_n > x\right] &\leq \sum_{i=1}^n \mathbb P \left[D_i > x\right] \leq n \mathbb P \left[D_n > x\right] \leq \left(\frac{e \log n}{x}\right)^{x}.
\end{align*}
If $x = (1+\epsilon)e\log n$, then
\begin{align*}
    &\mathbb P \left[H_n > (1+\epsilon)e\log n\right] \leq \left(\frac1{1+\epsilon}\right)^{(1+\epsilon)e\log n} \ensuremath{\stackrel{n \to \infty}{\longrightarrow}} 0.
\end{align*}
\end{proof}

The Connecting Lemma extends this result to the \textsc{rrt}.

\begin{thm}
Let $T_\epsilon([0,1]^2) = ((X_i,E_i))_{i \in \mathbb N}$ be an \textsc{rrt}, let $D_n'$ be the depth of $X_n$, and let $H_n'$ be the height of $T_\epsilon^n$. For every $\delta > 0$, we have
\begin{align*}
    &\lim_{n \to \infty} \mathbb P \left[D_n' > (1 + \delta) \log n\right] = 0, \\
    &\lim_{n \to \infty} \mathbb P \left[H_n' > (1 + \delta) e \log n\right] = 0.
\end{align*}
\end{thm}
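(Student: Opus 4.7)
The plan is to apply the Connecting Lemma with $\Lambda = T_\epsilon^{\tau(T_\epsilon)}$ and $S = \tau(T_\epsilon)$, viewing the post-covering RRT as a nearest-neighbour tree grafted onto $\Lambda$. Both invariants of $\Lambda$ that enter the lemma are a.s.\ finite random variables independent of $n$: $S = \tau(T_\epsilon)$ has finite expectation by Theorem 2, and $H(\Lambda) \leq S$ because $\Lambda$ has only $S+1$ vertices. Moreover, by the strong Markov property applied to the i.i.d.\ sample sequence $(Y_i)$, the post-covering samples are i.i.d.\ uniform on $[0,1]^2$ and independent of $(S, H(\Lambda))$, so they drive a pure \textsc{nnt} to which the preceding lemma applies.

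For the depth statement I would start from the Connecting Lemma inequality
\begin{equation*}
    D_n' \leq D_{n-S} + H(\Lambda) + 1, \qquad n > S,
\end{equation*}
where $D_{n-S}$ denotes the depth of the $(n-S)$-th vertex in the pure \textsc{nnt}. Fix $\eta > 0$ and pick $K$ so large that $\mathbb P[\max(S, H(\Lambda)) > K] < \eta/2$. On the complementary event, for $n$ sufficiently large
\begin{equation*}
    (1+\delta)\log n - K - 1 \geq (1+\tfrac{\delta}{2})\log(n - K) \geq (1+\tfrac{\delta}{2})\log(n - S),
\end{equation*}
so the event $\{D_n' > (1+\delta)\log n\}$ is contained in $\{D_{n-S} > (1+\tfrac{\delta}{2})\log(n-S)\}$. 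Applying the preceding \textsc{nnt} lemma at index $n - S$ (which tends to infinity uniformly over $S \leq K$) makes the probability of this latter event smaller than $\eta/2$ for all sufficiently large $n$, and the depth claim follows.

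For the height, the same decomposition yields
\begin{equation*}
    H_n' \leq H(\Lambda) + 1 + \max_{1 \leq j \leq n - S} D_j,
\end{equation*}
since every vertex in $\Lambda$ has depth at most $H(\Lambda)$ and every subsequent vertex has depth at most $H(\Lambda) + 1$ plus its depth in the pure \textsc{nnt}. The right-hand maximum is the \textsc{nnt} height at time $n - S$; the same $K$-truncation combined with the \textsc{nnt} height bound and the inequality $(1+\delta)e\log n - K - 1 \geq (1+\tfrac{\delta}{2})e\log(n - K)$ for large $n$ closes the argument.

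The main obstacle is purely bookkeeping: once $S$ is capped at a constant $K$, replacing $n$ by $n - S$ inside the logarithm costs only a factor $1 + o(1)$, and the additive $H(\Lambda) + 1 \leq K + 1$ is absorbed into the slack between $(1+\delta)$ and $(1+\tfrac{\delta}{2})$. All the extremal-value work has already been carried out in the \textsc{nnt} lemma, so no new analytic estimates are needed beyond the truncation bookkeeping.
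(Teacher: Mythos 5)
Your proposal is correct and follows essentially the same route as the paper: invoke the Connecting Lemma with $\Lambda = T_\epsilon^{\tau(T_\epsilon)}$, kill the event that $\tau(T_\epsilon)$ (and hence $H(\Lambda)$) is large, absorb the additive $H(\Lambda)+1$ term into the slack between $(1+\delta)$ and $(1+\tfrac{\delta}{2})$, and apply the preceding \textsc{nnt} lemma. The only difference is bookkeeping --- you truncate $\tau$ at a fixed constant $K$ via an $\eta$-argument while the paper truncates at $\tfrac{\delta}{2}\log n$ --- and your version is, if anything, more careful about the index shift from $n$ to $n-S$ and the conditional independence of the post-covering samples.
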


\begin{proof}
Simply observe that by the Connecting Lemma
\begin{align*}
    \{D_n' > (1 + \delta) \log n\}	\subset\ &\left\{D_n > \left(1 + \frac\delta2\right) \log n\right\} \cup \left\{\tau(T_\epsilon) > \frac \delta 2 \log n \right\}.
\end{align*}

It follows that
\begin{align*}
    &\lim_{n \to \infty} \mathbb P \left[D_n' > (1 + \delta) \log n\right] \\
    \leq\ &\lim_{n \to \infty} \mathbb P \left[D_n > \left(1 + \frac\delta2\right) \log n\right] + \lim_{n \to \infty} \mathbb P \left[\tau(T_\epsilon) > \frac \delta 2 \log n\right] \\
    =\ &0.
\end{align*}
The result for $H_n'$ follows similarly.
\end{proof}

We give a bound on the expectation as well. Again, we start with a lemma on the \textsc{nnt}.

\begin{lem}
Let $T_\textsc{nn}([0,1]^2) = ((X_i,E_i))_{i \in \mathbb N}$ be an \textsc{nnt}, let $D_n$ be the depth of $X_n$, and let $H_n$ be the height of $T_\textsc{nn}^n$. Then
\begin{align*}
    \mathbb E \left[D_n\right] &\leq (1 + o(1)) \log n, \\
    \mathbb E \left[H_n\right] &\leq (e + o(1)) \log n.
\end{align*}
\end{lem}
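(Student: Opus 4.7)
The plan is to convert the concentration bounds from the previous lemma into expectation estimates via the tail-sum formula $\mathbb{E}[D_n] = \sum_{k \geq 0} \mathbb{P}[D_n > k]$, and similarly for $H_n$. Everything is set up in the preceding lemma: we already have the pointwise tail bounds
\begin{align*}
    \mathbb{P}\left[D_n > k\right] &\leq \frac{1}{n}\left(\frac{e \log n}{k}\right)^k \qquad (k > \log n), \\
    \mathbb{P}\left[H_n > k\right] &\leq \left(\frac{e \log n}{k}\right)^k,
\end{align*}
and the routine observation that $\phi(\epsilon) := \epsilon - (1+\epsilon)\log(1+\epsilon) < 0$ for every $\epsilon > 0$.

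For the depth, I would fix $\epsilon > 0$, let $k_\epsilon := \lceil (1+\epsilon)\log n \rceil$, and split
\begin{align*}
    \mathbb{E}\left[D_n\right] \leq k_\epsilon + \sum_{k > k_\epsilon} \mathbb{P}\left[D_n > k\right].
\end{align*}
Using the tail bound, the first term of the tail sum (at $k = k_\epsilon$) is $\tfrac{1}{n}(e/(1+\epsilon))^{(1+\epsilon)\log n} = n^{\phi(\epsilon)}$, and the ratio of consecutive terms $(e\log n/(k+1))^{k+1}/(e\log n/k)^k$ is bounded above by a quantity tending to $0$ once $k > e\log n$; a quick geometric-series estimate therefore shows $\sum_{k > k_\epsilon} \mathbb{P}[D_n > k] = O(n^{\phi(\epsilon)}) = o(1)$. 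This gives $\mathbb{E}[D_n] \leq (1+\epsilon)\log n + o(1)$ for every $\epsilon > 0$, and hence $\mathbb{E}[D_n] \leq (1+o(1))\log n$ as desired.

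For the height, the same scheme is applied at threshold $k_\epsilon' := \lceil (1+\epsilon) e \log n \rceil$. The bound $\mathbb{P}[H_n > k] \leq (e\log n / k)^k$ evaluated at $k = k_\epsilon'$ gives $(1+\epsilon)^{-(1+\epsilon)e\log n} = n^{-(1+\epsilon)e\log(1+\epsilon)}$, which is $n^{-c(\epsilon)}$ with $c(\epsilon) > 0$. Once again the tail sum is dominated by a geometric series with the first term so small that the entire sum is $o(1)$. Consequently $\mathbb{E}[H_n] \leq (1+\epsilon)e\log n + o(1)$ for every $\epsilon > 0$, and letting $\epsilon \to 0$ yields $\mathbb{E}[H_n] \leq (e + o(1))\log n$.

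The main (minor) obstacle is purely technical: verifying that the tail sum really does decay geometrically fast enough to swallow the $\log n$ prefactor that appears from writing the sum as a Riemann-sum-like object. Since $k\log(e\log n/k)$ is eventually strictly decreasing in $k$ once $k > e\log n$, and our cutoffs are chosen above that point, this is straightforward: the dominant contribution is the boundary term, which is a negative power of $n$, so multiplying by any $\text{poly}(\log n)$ number of tail terms still leaves $o(1)$. The rest of the argument is bookkeeping around letting $\epsilon \to 0$ after taking $n \to \infty$.
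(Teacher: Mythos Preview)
Your approach is essentially the paper's: both use the tail-sum formula $\mathbb{E}[Z]=\sum_{k\ge 0}\mathbb{P}[Z>k]$, split at the threshold $(1+\epsilon)\log n$ (respectively $(1+\epsilon)e\log n$), and show the remaining tail is $o(1)$ using the bounds from the previous lemma. The only cosmetic difference is that the paper makes a three-way split (inserting an extra cut at $2e\log n$, bounding the middle block by its length times the boundary term $n^{\phi(\epsilon)}$ and the far tail by $\sum 2^{-k}$), whereas you absorb both pieces into a single geometric-series estimate.

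One small slip to fix: you write that $k\mapsto k\log(e\log n/k)$ is decreasing once $k>e\log n$ and that ``our cutoffs are chosen above that point.'' In fact $\log a_k$ already decreases for $k>\log n$ (its derivative is $\log(\log n/k)$), and for the depth your cutoff $k_\epsilon=(1+\epsilon)\log n$ lies \emph{below} $e\log n$, not above it. This does not hurt the argument---the ratio $a_{k+1}/a_k\approx \log n/k\le 1/(1+\epsilon)$ for $k\ge k_\epsilon$, so the geometric comparison still works---but the stated justification should be corrected.
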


\begin{proof}
Note that for $\epsilon > 0$ fixed,
\begin{align*}
    \mathbb E \left[D_n\right] &= \sum_{i = 0}^\infty \mathbb P\left[D_n > i\right] \\
    &\leq (1 + \epsilon) \log n \\
    &\quad + \sum_{i = \left \lceil (1 + \epsilon)\log n \right \rceil}^{\left \lfloor 2e\log n \right \rfloor} \mathbb P \left[D_n > i\right] \\
    &\quad + \sum_{i = \left \lfloor 2e\log n \right \rfloor + 1}^n \mathbb P\left[D_n > i\right].
\end{align*}

For the third term, note that due to our bound in Lemma 4, for $x \geq 2e\log n$,
\begin{align*}
    \sum_{i = \left \lfloor 2e\log n \right \rfloor + 1}^n \mathbb P \left[D_n > i\right] &\leq \sum_{i = \left \lfloor 2e\log n \right \rfloor + 1}^n \mathbb P \left[H_n > i\right] \\
    &\leq \sum_{i = \left \lfloor 2e\log n \right \rfloor + 1}^\infty \frac1{2^i} \\
    &\leq \frac2{2^{2e \log n}} \\
    &= o(1).
\end{align*}

For the second term, we have the bound
\begin{align*}
    &\sum_{i = \left \lceil (1 + \epsilon)\log n \right \rceil}^{\left \lfloor 2e\log n \right \rfloor} \mathbb P \left[D_n > i\right] \leq 2e\log n \cdot \mathbb P \left[D_n > (1 + \epsilon) \log n\right].
\end{align*}

Again, by Lemma 4,
\begin{align*}
    &\sum_{i = \left \lceil (1 + \epsilon)\log n \right \rceil}^{\left \lfloor 2e\log n \right \rfloor} \mathbb P \left[D_n > i\right] \leq 2e\log n \cdot n^{\phi(\epsilon)} = o(1).
\end{align*}

Thus,
\begin{align*}
    \mathbb E \left[D_n\right] &\leq (1 + \epsilon) \log n + o(1) = (1 + o(1)) \log n.
\end{align*}

For $H_n$ and fixed $\epsilon > 0$ we have similar bounds:
\begin{align*}
    \mathbb E \left[H_n\right] &\leq (e + \epsilon) \log n \\
    &\quad + \sum_{i = \left\lceil(e + \epsilon) \log n \right \rceil}^{\left \lfloor 2e\log n \right \rfloor} \mathbb P\left[H_n > i\right] \\
    &\quad + \sum_{i = \left \lfloor 2e\log n \right \rfloor + 1}^n \mathbb P \left[H_n > i\right].
\end{align*}

We have already shown the third term is $o(1)$, so we turn our attention to the second term. We again use a bound from Lemma 4 to get
\begin{align*}
     \sum_{i = \left\lceil(e + \epsilon) \log n \right \rceil}^{\left \lfloor 2e\log n \right \rfloor} \mathbb P \left[H_n > i\right] &\leq 2e \log n \cdot \left(\frac e{e + \epsilon}\right)^{(e + \epsilon)\log n} \\
    &= 2e \log n \cdot n^{(e + \epsilon)(1 - \log(e + \epsilon))} \\
    &= o(1).
\end{align*}
Thus,
\begin{align*}
    \mathbb E \left[H_n\right] \leq (e + o(1)) \log n.
\end{align*}
\end{proof}

The Connecting Lemma also allows an easy extension of this result to the \textsc{rrt}.

\begin{thm}
Let $T_\epsilon([0,1]^2) = ((X_i,E_i))_{i \in \mathbb N}$ be an \textsc{rrt}, let $D_n'$ be the depth of $X_n$, and let $H_n'$ be the height of $T_\epsilon^n$. Then
\begin{align*}
    \mathbb E \left[D_n'\right] &\leq (1 + o(1)) \log n, \\
    \mathbb E \left[H_n'\right] &\leq (e + o(1)) \log n.
\end{align*}
\end{thm}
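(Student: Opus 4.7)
The plan is to reduce the \textsc{rrt} to an \textsc{nnt} via the Connecting Lemma and transfer the preceding lemma's bounds. View $T_\epsilon$ as the connection process with $\Lambda := T_\epsilon^{\tau(T_\epsilon)}$ and auxiliary \textsc{nnt} $T_\textsc{nn}$ built from the post-covering samples $Y_{\tau(T_\epsilon)+1}, Y_{\tau(T_\epsilon)+2}, \dots$; these samples are independent of the covering-time history by the strong Markov property, so $T_\textsc{nn}$ has the standard \textsc{nnt} distribution. The Connecting Lemma then yields, on the event $\{n \geq \tau(T_\epsilon)\}$,
\begin{align*}
    D_n' &\leq D_{n - \tau(T_\epsilon)} + H(\Lambda) + 1, \\
    H_n' &\leq H_{n - \tau(T_\epsilon)} + H(\Lambda) + 1,
\end{align*}
where $D$ and $H$ refer to the depth and height in $T_\textsc{nn}$. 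The bound for $H_n'$ follows by taking the max over indices $i \leq n$: nodes added before covering have depth at most $H(\Lambda)$, while later nodes inherit the lemma's inequality.

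Next I would argue that the correction terms are asymptotically negligible. Since $\Lambda$ has $\tau(T_\epsilon)+1$ vertices, $H(\Lambda) \leq \tau(T_\epsilon)$, and $\mathbb E[\tau(T_\epsilon)] = \Theta(\epsilon^{-2}\log(1/\epsilon))$ is a constant in $n$. Moreover, $\tau(T_\epsilon)$ is stochastically dominated by a finite sum of geometric random variables as used already in the Phase~2 analysis, so all of its moments are finite and its tail decays super-polynomially. Thus $\mathbb E[H(\Lambda)+1] = O(1) = o(\log n)$, and on the exceptional event $\{n < \tau(T_\epsilon)\}$ the trivial bounds $D_n', H_n' \leq n$ together with Markov give $n \cdot \mathbb P[\tau(T_\epsilon) > n] = o(1)$.

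For the main term, I would condition on $\tau(T_\epsilon) = t$ and split according to whether $t \leq \sqrt n$ or $t > \sqrt n$. On $\{t \leq \sqrt n\}$, the preceding lemma applied to $T_\textsc{nn}$ gives $\mathbb E[D_{n-t}] \leq (1+o(1))\log(n-t) \leq (1+o(1))\log n$ and analogously $\mathbb E[H_{n-t}] \leq (e+o(1))\log n$, uniformly in such $t$. The complementary slice $\{t > \sqrt n\}$ contributes at most $n \cdot \mathbb P[\tau(T_\epsilon) > \sqrt n] = o(1)$ by the moment bound on $\tau(T_\epsilon)$. Taking expectations over $\tau(T_\epsilon)$ and combining with the correction estimate yields the two claimed inequalities.

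The main obstacle is the interplay between the random index shift $n - \tau(T_\epsilon)$ and the asymptotic \textsc{nnt} bounds, since $\log(n - \tau(T_\epsilon))$ differs from $\log n$ by a random amount, and one must verify that the NNT sub-process (with its nearest-neighbour computation over \emph{all} tree vertices rather than just its own) still enjoys the correct distribution. The former is resolved by the concentration of $\tau(T_\epsilon)$ around its constant mean, which forces $\tau(T_\epsilon)/\log n \to 0$ on the relevant events and absorbs the discrepancy into the $o(1)$ term; the latter is precisely the content of the Connecting Lemma, which packages the possible gain from extra candidate parents into the additive $H(\Lambda)+1$.
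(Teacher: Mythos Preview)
Your proposal is correct and follows the same route as the paper: apply the Connecting Lemma with $\Lambda = T_\epsilon^{\tau(T_\epsilon)}$, bound $H(\Lambda)+1 \le \tau(T_\epsilon)+1$ (whose expectation is $O(1)$ in $n$), and invoke the preceding \textsc{nnt} lemma for the main term. The paper's proof is a single line,
\[
\mathbb E[D_n'] \le \mathbb E[D_n] + \mathbb E[\tau(T_\epsilon)] \le (1+o(1))\log n + O(1),
\]
which silently replaces $D_{n-\tau(T_\epsilon)}$ by $D_n$; this is legitimate because $D_k$ is stochastically increasing in $k$ (clear from the records/coupling representation $D_k = \inf\{j:\lfloor\cdots\lfloor kU_1\rfloor\cdots U_j\rfloor=0\}$) and the post-covering \textsc{nnt} is independent of $\tau(T_\epsilon)$. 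Your more explicit treatment---splitting on $\{\tau(T_\epsilon)\le \sqrt n\}$ versus $\{\tau(T_\epsilon)>\sqrt n\}$ and handling $\{n<\tau(T_\epsilon)\}$ separately---mirrors the style of the Euclidean-distance theorem earlier in the paper and avoids appealing to that monotonicity, at the cost of a little extra bookkeeping. Either way the argument goes through; your version simply fills in details the paper leaves implicit.
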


\begin{proof}
By the Connecting Lemma
\begin{align*}
	\mathbb E \left[D_n'\right] &\leq \mathbb E\left[D_n\right] + \mathbb E \left[\tau(T_\epsilon)\right] \leq (1 + o(1)) \log n + O(1) \sim \log n.
\end{align*}
The result for $H_n'$ follows similarly.
\end{proof}

\section{Future Work}
There is a lot of work to be done using this framework. Immediate questions we are working on are:
\begin{itemize}
    \item Is the bound on reaching a convex region of positive probability tight? As the question is open, there remains hope that the expected time is in fact its trivial lower bound, $\Theta\left(\frac1\epsilon\right)$. Further, how long does it take the tree to reach the boundary of the space?
    \item What is the expected run time of \textsc{rrt}-Connect? Finding a path should be much faster than finding a single point, and growing two trees into each other should greatly improve the performance.
    \item How does the performance change when barriers are added? For a specific space it is relatively easy to use this framework to determine an expected run time, but what can be said about a general space?
\end{itemize}

Our approach for the question of barriers will be to study random barriers generated by percolation models and determine expected run times on these spaces. Specifically, we will study site percolation as presented by Broadbent and Hammersly \cite{broadbent1957percolation} and lilypad percolation as presented by Gilbert \cite{gilbert1961random}.

For site percolation, we take the space $[0,1]^d$ and drop a grid on it of side length $\epsilon$. For chosen probability $p$---which we will vary to model spaces with different quantities of barriers---we will place a barrier at each grid cube. We then run the \textsc{rrt} on the largest connected component of the space we have created and ask ourselves the same questions as we have asked for $[0,1]^d$: what is the expected covering time of the connected component? What is the expected time for the \textsc{rrt} to spread through the space? What is the expected run time of \textsc{rrt}-Connect?

For lilypad percolation we fill the space with $n$---again, we will vary $n$---discs whose radius is a function of $\epsilon$ and whose centers are chosen uniformly and independently in $[0,1]^d$.

\section{Conclusion}
Our results prove the conjectured performance of the \textsc{rrt}, answer some questions on its performance, and give a greater understanding to its behaviour while providing a framework for further analysis of the tree.

To summarize, we have split the growth of the \textsc{rrt} into three phases: initial growth, covering behaviour, and post covering growth. In the initial phase we have studied the time it takes the \textsc{rrt} to reach a convex area of positive probability, furthering the study of Arnold et al. \cite{arnold2013convex} on how fast the tree spreads through a space .

In the covering phase we have determined the run time the tree needs to grow close to every point in the space. We have found that with step size $\epsilon,$ the run time of the algorithm as $\epsilon \to 0$ is in expectation
\begin{align*}
    \Theta\left(\frac1{\epsilon^d} \log \left(\frac1\epsilon\right)\right),
\end{align*}
and that fixing $\epsilon,$ the expected run time is $e^{\Omega(d\log d)}$. This confirms the conjecture that the run time is worse than exponential in the dimension and provides a good understanding of the performance as a function of step size.

In the post-covering phase we have used a relationship the the Nearest Neighbour Tree to examine its behaviour. As a result, we know that asymptotically the path length and total Euclidean path length are both $O(\sqrt n)$. As well, the expected depth of the $n$-th node and height of the tree at time $n$ are both $O(\log n)$.

\printbibliography 

\end{document}